\documentclass{article} 
\usepackage{iclr2027_conference,times}

\usepackage{amsmath,amsfonts,bm}

\def\eqref#1{equation~\ref{#1}}

\def\1{\bm{1}}

\DeclareMathAlphabet{\mathsfit}{\encodingdefault}{\sfdefault}{m}{sl}
\SetMathAlphabet{\mathsfit}{bold}{\encodingdefault}{\sfdefault}{bx}{n}

\usepackage{hyperref}
\usepackage{url}
\usepackage{tikz}
\usetikzlibrary{
    arrows.meta,
    positioning,
    shapes.geometric,
    fit,
    backgrounds
}
\usepackage{graphicx}
\usepackage{xcolor}
\usepackage{caption}
\usepackage{subcaption} 
\usepackage{amsmath}
\usepackage{amsthm}
\usepackage{booktabs}
\usepackage{tabularx}
\usepackage{array}
\usepackage{enumitem}
\usepackage{multirow, makecell}
\usepackage{soul}
\usepackage{amssymb}
\usepackage{comment}
\usepackage{natbib}
\usepackage{pifont} 
\usepackage{makecell}

\newtheorem{theorem}{Theorem}
\newtheorem{lemma}{Lemma}

\newtheorem{corollary}{Corollary}

\theoremstyle{definition}

\theoremstyle{remark}

\title{Evolving Hybrid Quantum–Classical Architectures for Image Classification}

\author{Devroop Kar, Daniel Krutz \& Travis Desell \\
Rochester Institute of Technology\\
Rochester, NY 14623, USA \\
\texttt{\{dk7405,dxkvse,tjdvse\}@rit.edu}
}

\iclrfinalcopy 
\begin{document}

\maketitle

\begin{abstract}


Hybrid quantum classical neural networks integrate parameterized quantum circuits (PQCs) with established deep learning architectures, but their performance depends strongly on the choice of quantum circuit architecture, a choice that remains largely manual. Most existing approaches rely on hand-designed or fixed circuit ans\"atze, requiring circuit structure, gate composition, and qubit connectivity to be specified in advance with no guarantee that they suit the task. This limitation is especially acute in image classification, where quantum circuits must transform features extracted by classical networks while remaining compact enough for practical training, requirements that generic, task-agnostic ans\"atze are unlikely to satisfy simultaneously. We extend EXAQC, an evolutionary framework for automated quantum circuit discovery, to image classification. EXAQC evolves PQCs as intermediate processing modules while retaining classical feature-extraction and prediction layers. On MNIST, Fashion-MNIST, and CIFAR-10, EXAQC achieves 98.42\%, 90.62\%, and 85.47\% accuracy, respectively, while using comparable gate counts to other quantum architecture-search methods. Against classical networks, evolved hybrid models maintain comparable accuracy with substantially fewer trainable parameters, reaching 85.68\% on CIFAR-10 with over 25$\times$ fewer parameters than a 10-layer CNN. Encoding choice also matters: rotation-based encodings (RX, RY, U3) outperform amplitude encoding by 22-25 points on CIFAR-10. These results demonstrate that automated circuit discovery yields compact quantum modules that can replace larger classical components in vision architectures while retaining competitive accuracy.

\end{abstract}
\section{Introduction}
\label{sec:introduction}

Parameterized quantum circuits (PQCs) are widely studied for quantum machine learning (QML), where classical data is encoded into quantum states and processed by trainable quantum operations \cite{havlivcek2019supervised,schuld2020circuit,perezsalinas2020data,wang2024trainability,ding2024scalable,tudisco2026evaluating}. However, the effectiveness of the concerned circuit depends strongly on architectural choices such as gate composition, circuit depth, connectivity, and data encoding \cite{sim2019expressibility, hubregtsen2021evaluation,benitez2025bayesian,liu2025analysis}. Poor circuit design can also produce barren plateaus, where gradients vanish exponentially with qubit count~\cite{mcclean2018barren, cerezo2021cost}, stalling training and rendering even expressive circuits effectively untrainable. Architecture choice therefore determines whether a hybrid model can learn at all. Selecting an effective PQC architecture therefore remains an important challenge. The difficulty of manually selecting circuits that are both well-suited to the task and trainable has motivated quantum architecture search (QAS), which automatically optimizes circuit structure through approaches including differentiable, predictor-based, and evolutionary search \cite{zhang2020dqas,wu2023quantumdarts, zhang2021predictor,ding2022evolutionary,zhang2023evolutionary,kar2026investigating}. 



Evolutionary circuit search has shown promise on low-dimensional benchmarks~\cite{kar2026evolutionary}, but practical machine learning rarely operates at that scale. Image classification offers a more demanding test: MNIST and Fashion-MNIST contain 784 input values per image, and CIFAR-10 contains 3,072, far more than the handful of qubits available on near-term hardware. Hybrid architectures bridge this gap by using classical feature extractors to compress images into compact representations for quantum processing ~\cite{wang2022development,liu2021hybrid}. As a result, a harder dataset does not require a larger quantum circuit, since the circuit's input can stay small even as the source images grow more complex. The question is therefore not whether evolved circuits can process thousands of raw pixels, but whether they can still find useful structure in compressed features as the underlying task becomes harder. Due to space constraints, a detailed discussion of related work on hybrid quantum classical image classification, quantum architecture search, and evolutionary circuit design is provided in Appendix~\ref{sec:related_work}.

This work extends EXAQC~\citep{kar2026investigating}\footnote{Source code publicly available at: \url{https://github.com/travisdesell/exaqc}} to image classification, introducing hybrid quantum circuits using encoders to extract image features. This works also uses U3 gates as a quantum encoding mechanism unlike previous works (Refer Appendix \ref{sec:related_work}). Because circuit structure is discovered rather than fixed in advance, the evolved circuits can be tailored to the learned features and kept compact, avoiding both mismatched ans\"atze and the trainability problems of larger circuits. We investigate both scalability and whether evolved quantum modules can replace larger classical components while maintaining competitive predictive performance with fewer trainable parameters. Specifically, we address the following research questions: 

\begin{itemize}
    
    \item \textbf{RQ1: Scalability and Hybrid Effectiveness.}
    Can EXAQC scale from low-dimensional learning problems to complex image-classification tasks by discovering useful quantum transformations of learned image representations within a hybrid quantum classical framework? \textit{Yes: EXAQC reaches 98.42\%, 90.62\%, and 85.47\% on the full 10-class MNIST, Fashion-MNIST, and CIFAR-10 tasks, exceeding the compared architecture-search methods on CIFAR-10 while using fewer gates than EQNAS and AQEA-QAS.}

    \item \textbf{RQ2: Representation Dimensionality.}
    How does increasing the dimensionality of the quantum circuit inputs affect predictive performance, circuit complexity, convergence, and computational requirements? \textit{The quantum representation clearly affects performance and circuit size, since CIFAR-10 required more qubits (8-10), larger circuits (42-60 gates), and a larger search budget.}


    \item \textbf{RQ3: Quantum Encoding Effectiveness.}
    How does the choice of quantum input encoding (amplitude, RX, RY, or U3) affect the accuracy and complexity of evolved circuits, and does this effect depend on the dataset? \textit{The effect is strongly dataset-dependent: encodings differ by under one point on MNIST, whereas on CIFAR-10 rotation-based encodings (83.75-86.68\%) far outperform amplitude encoding (61.38\%), with the best-performing U3 also producing the largest circuits.}

    \item \textbf{RQ4: Parameter Efficiency.}
    Do evolved quantum components maintain competitive classification performance relative to larger classical counterparts, despite having fewer trainable parameters? \textit{Yes: EXAQC matches the 10-layer CNN on CIFAR-10 (85.68\% vs.\ 85.50\%) with 96.1\% fewer parameters and exceeds the 1-layer CNN on Fashion-MNIST (91.11\% vs.\ 89.47\%) with 54.8\% fewer parameters.}
\end{itemize}


\section{Methodology}
\label{sec:methodology}


Raw images are far too large to encode directly on near-term quantum hardware, and fixed circuit ans\"atze may be poorly matched to the features a classical network extracts. To address both issues, we extend EXAQC to higher-dimensional image classification using a hybrid quantum classical architecture consisting of a classical encoder\footnote{Please note that encoder in the context of this work means feature extractor}, an evolved parameterized quantum circuit (PQC), and a classical decoder. The encoder extracts a compact image representation, EXAQC evolves the quantum transformation applied to this representation, and the decoder maps quantum measurements to class predictions. This design separates the dimensionality of the original image from that processed by the quantum component, allowing a small, evolved circuit to operate on complex image data.

\subsection{Hybrid Quantum Classical Architecture}
\label{sec:hybrid_architecture}

\begin{figure}
    \centering
    \includegraphics[scale=0.17]{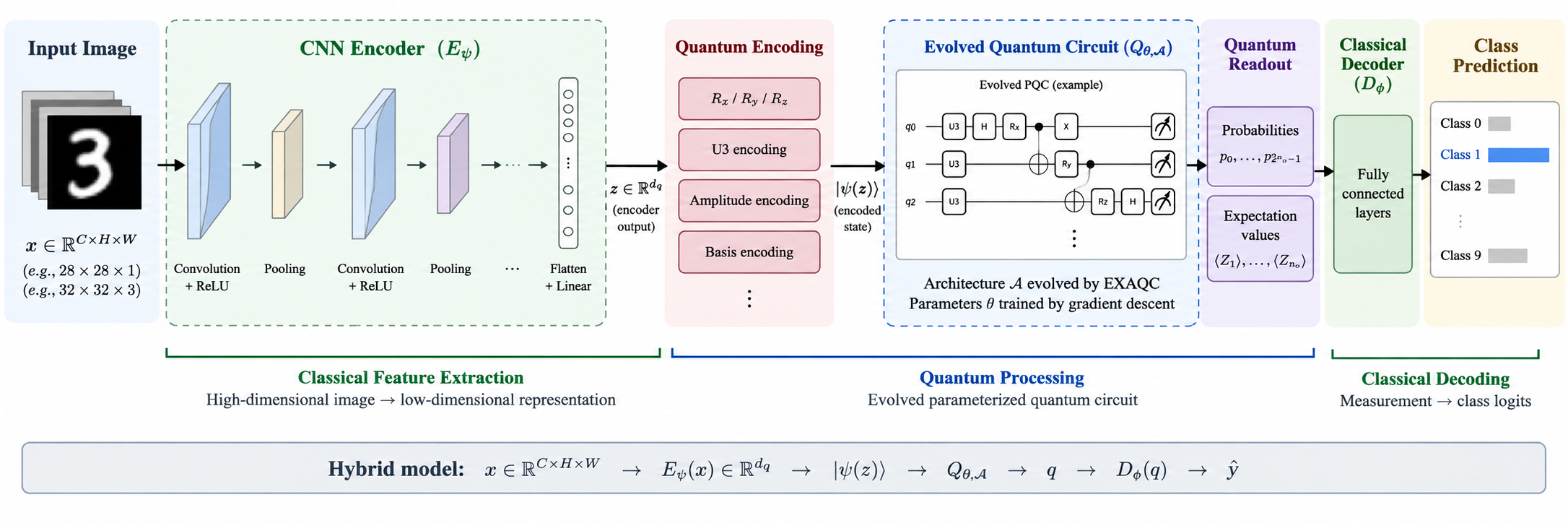}
    \caption{Overview of the proposed EXAQC hybrid quantum classical architecture. A CNN encoder compresses the input image into a lower-dimensional representation that is encoded and processed by an evolved PQC. Quantum measurements are subsequently mapped to class
    predictions by a classical decoder.}
    \label{fig:exaqc_hybrid_architecture}
\end{figure}

Given an image $\mathbf{x}$, the hybrid model is $\hat{\mathbf{y}} = D_{\phi} \left(Q_{\theta,\mathcal{A}} \left(E_{\psi}(\mathbf{x})\right) \right)$, where $E_{\psi}$ is the classical encoder, $Q_{\theta,\mathcal{A}}$ is a quantum circuit with trainable parameters $\theta$ and evolved architecture $\mathcal{A}$, and $D_{\phi}$ is the classical decoder. For $\mathbf{x}\in\mathbb{R}^{C\times H\times W}$, the encoder maps the raw $d_{\mathrm{raw}}=C \times H \times W$ input dimensions to $E_{\psi}: \mathbb{R}^{C\times H\times W} \rightarrow \mathbb{R}^{d_q}$, where $d_q$ denotes the representation supplied to the quantum encoding stage and $C$, $H$, $W$ represent the image channel, height and width. This separation allows raw input complexity and quantum representation dimensionality to be studied independently.

\subsection{Classical Feature and Quantum Encoding}
\label{sec:classical_encoding}
The encoder $E_{\psi}$ can be identity block, a linear block or multi-layer perceptron (MLP), or consist of configurable convolutional blocks for spatial feature extraction followed by a fully connected projection to the required quantum input dimension. Its convolutional channels, pooling, activation, normalization, and dropout are configurable. EXAQC supports rotation, U3, and amplitude input encodings \cite{rath2024quantum}. For single-angle $R_x$, or $R_y$ encoding, each of $n_q$ input qubits receives one feature from $d_q$ input features and therefore $n_q=d_q$. For U3 encoding, three features parameterize each input qubit using

\begin{equation}
U_3(\theta,\phi,\lambda)
=
\begin{bmatrix}
\cos(\theta/2) &
-e^{i\lambda}\sin(\theta/2) \\
e^{i\phi}\sin(\theta/2) &
e^{i(\phi+\lambda)}\cos(\theta/2)
\end{bmatrix},
\end{equation}

giving $n_q=d_q/3$. U3 encoding therefore increases the number of classical features presented to a fixed number of input qubits. Amplitude encoding provides an alternative mapping between classical representations and quantum states where $n_q = \lceil log_2(d_q) \rceil$. 

\subsection{Evolutionary Quantum Circuit Search}
\label{sec:evolutionary_search}

EXAQC represents each PQC as a variable-length circuit genome containing its input and output qubits and an ordered collection of quantum gates. Each gate specifies its operation, associated qubits, circuit position, parameters, innovation number, and enabled state. An architecture is therefore $\mathcal{A}=\{g_1,g_2,\ldots,g_m\}$, where both $m$ and the arrangement of gates are determined through evolution. Candidate circuits can consequently differ in gate composition, depth, parameter count, and qubit connectivity rather than being constrained to a
fixed ans\"atz.

\begin{figure*}[!ht]
    \centering
    \resizebox{\textwidth}{!}{%
    \begin{tikzpicture}[
        font=\small,
        >=Latex,
        node distance=0.55cm and 0.65cm,
        box/.style={
            rectangle,
            rounded corners=2pt,
            draw,
            thick,
            align=center,
            minimum height=1.0cm,
            minimum width=2.15cm,
            inner sep=4pt
        },
        smallbox/.style={
            rectangle,
            rounded corners=2pt,
            draw,
            thick,
            align=center,
            minimum height=0.9cm,
            minimum width=1.95cm,
            inner sep=3pt
        },
        evalbox/.style={
            rectangle,
            draw,
            align=center,
            minimum height=0.9cm,
            minimum width=2.1cm,
            inner sep=4pt
        },
        arrow/.style={
            ->,
            thick,
            >=Latex
        },
        feedback/.style={
            ->,
            thick,
            dashed,
            >=Latex
        }
    ]


    \node[box] (population) {
        \textbf{Population}\\
        Circuit genomes
    };

    \node[box, right=of population] (selection) {
        \textbf{Parent Selection}
    };

    \node[smallbox, right=of selection, yshift=0.55cm] (mutation) {
        \textbf{Mutation}\\
        \scriptsize Structural changes
    };

    \node[smallbox, right=of selection, yshift=-0.55cm] (crossover) {
        \textbf{Crossover}\\
        \scriptsize Combine parents
    };

    \node[box, right=1.0cm of mutation, yshift=-0.55cm] (offspring) {
        \textbf{Offspring}\\
        Architecture $\mathcal{A}'$
    };

    \node[box, right=of offspring] (update) {
        \textbf{Population Update}\\
        Insert or discard
    };


    \draw[arrow] (population.east) -- (selection.west);

    \draw[arrow]
        (selection.east) -- ++(0.25,0)
        |- (mutation.west);

    \draw[arrow]
        (selection.east) -- ++(0.25,0)
        |- (crossover.west);

    \draw[arrow]
        (mutation.east) -- ++(0.25,0)
        |- (offspring.west);

    \draw[arrow]
        (crossover.east) -- ++(0.25,0)
        |- (offspring.west);


    \node[evalbox, below=1.25cm of offspring, xshift=-2.1cm] (build) {
        \textbf{Build Hybrid Model}\\
        $E_{\psi}\rightarrow
        Q_{\theta,\mathcal{A}'}
        \rightarrow D_{\phi}$
    };

    \node[evalbox, right=of build] (train) {
        \textbf{Train Parameters}\\
        $\psi,\theta,\phi$
    };

    \node[evalbox, right=of train] (fitness) {
        \textbf{Validation Fitness}\\
        Loss / accuracy
    };

    \draw[arrow]
        (offspring.south)
        -- ++(0,-0.35)
        -| (build.north);

    \draw[arrow] (build.east) -- (train.west);
    \draw[arrow] (train.east) -- (fitness.west);

    \draw[arrow]
        (fitness.east)
        -- ++(0.45,0)
        |- (update.south);

    
    \draw[feedback]
        (update.north)
        -- ++(0,1.15)
        node[midway, right=3pt, font=\scriptsize] {\textbf{Repeat}}
        -| (selection.north);
        

    \node[
        draw,
        dashed,
        rounded corners,
        inner sep=5pt,
        fit=(selection)(mutation)(crossover)(offspring),
        label={[font=\scriptsize]above:
        \textbf{Evolutionary architecture search}}
    ] (searchgroup) {};

    \node[
        draw,
        dashed,
        rounded corners,
        inner sep=5pt,
        fit=(build)(train)(fitness),
        label={[font=\scriptsize]below:
        \textbf{Candidate training and evaluation}}
    ] {};


    \node[
        draw,
        rounded corners=2pt,
        align=left,
        font=\scriptsize,
        inner sep=4pt,
        anchor=north west
    ] at (population.south west |- build.north) {
        $E_{\psi}$: classical encoder, parameters $\psi$\\
        $Q_{\theta,\mathcal{A}'}$: evolved PQC, architecture $\mathcal{A}'$,\\
        \hspace{1.2em}gate parameters $\theta$\\
        $D_{\phi}$: classical decoder, parameters $\phi$
    };

    \end{tikzpicture}%
    }

    \caption{
        EXAQC evolutionary search workflow for hybrid image classification.
        Parent circuit genomes are selected and modified through mutation or
        crossover to create offspring architectures. Each offspring is
        instantiated within the hybrid model and its classical and quantum
        parameters are optimized using gradient-based training. Validation
        performance determines whether the offspring is inserted into the
        steady-state population. 
    }
    \label{fig:exaqc_evolution}
\end{figure*}

Fig.~\ref{fig:exaqc_evolution} demonstrates how evolution modifies the architecture $\mathcal{A}$ while gradient-based learning optimizes the continuous model parameters. Structural mutation can add, enable, disable, or reorder gates and modify their associated qubits, while binary, exponential, and $n$-ary crossover combine information from parent circuits. Compatible learned parameters are transferred to offspring through Lamarckian parameter inheritance (\cite{lyu2021experimental,kar2026investigating}). Candidate circuits are evaluated in parallel using EXAQC's asynchronous master-worker framework. Under steady-state evolution, evaluated offspring immediately compete for population insertion rather than waiting for generational replacement. EXAQC also supports island-based steady-state evolution, in which multiple subpopulations explore different regions of the architecture space while allowing inter-island information exchange.

\subsection{Training and Quantum Readout}
\label{sec:quantum_training}

For each candidate architecture, the encoder parameters $\psi$, quantum parameters $\theta$, and decoder parameters $\phi$, denoted jointly as $\Theta=\{\psi,\theta,\phi\}$, are optimized by minimizing the cross-entropy loss $\mathcal{L}_{CE} = -\frac{1}{N} \sum_{i=1}^{N} \log p_{\Theta}(y_i \mid \mathbf{x}_i),$ where $N$ is the number of training examples, $\mathbf{x}_i$ is the $i$-th input image, $y_i$ is its class label, and $p_{\Theta}(y_i \mid \mathbf{x}_i)$ is the probability the hybrid model assigns to the correct class, obtained by applying a softmax to the decoder's output logits. Validation performance provides the fitness used by evolutionary selection. After the evolved circuit is applied, designated output qubits are measured and supplied to the decoder. For probability readout over $n_o$ output qubits, $\mathbf{q} = [p_0,p_1,\ldots,p_{2^{n_o}-1}]$, where $p_k$ is the probability of measuring computational basis state $k$, whereas expectation-value readout produces $\mathbf{q} = [\langle Z_1\rangle,\ldots,\langle Z_{n_o}\rangle]$, where $\langle Z_j\rangle$ is the expectation value of the Pauli-$Z$ observable on output qubit $j$. The decoder $D_{\phi}$ then maps $\mathbf{q}$ to the final class logits.
\section{Experimental Design}
\label{sec:experimental_design}

As evolutionary search is stochastic, we report test performance across circuit evaluations and variability across 10 independent runs, indicating how quickly and reliably good circuits are found. Circuit complexity is measured by gate count, and trainable quantum parameters, which determine practicality on near-term hardware and match the resources reported by prior methods. 
All evolutionary experiments use the same EXAQC implementation and quantum simulation backend. Configurations specify population strategy and size, evaluation budget, mutation and crossover settings, optimizer, learning rate, training steps, batch size, encoding and readout modes, and random seeds. Because EXAQC uses asynchronous steady-state evolution, search progress is reported by the number of evaluated circuits rather than conventional generations. Aaggregate statistics are reported across runs. Representative evolved circuits are selected using test performance.






\paragraph{Baselines and Model Configurations}
EXAQC hybrid models are compared against fixed-architecture classical baselines. Classical models use the same image-processing pipeline, replacing the evolved PQC with classical fully connected transformations. Where possible, encoder and decoder configurations are held constant across corresponding models.

\paragraph{Scalability and Representation Studies}
To address \textbf{RQ1}, EXAQC is evaluated on MNIST~\cite{deng2012mnist}, Fashion-MNIST~\cite{xiao2017fashion}, and CIFAR-10~\cite{krizhevsky2009learning} , comparing accuracy, circuit resources, and convergence against prior quantum approaches. To address \textbf{RQ2}, we vary the classical encoder that produces the quantum circuit's input (identity, MLP, or CNN) while holding the evolutionary configuration fixed, isolating how the supplied representation affects accuracy and circuit complexity.
Accuracy, convergence, circuit depth, gate and parameter counts, and computational cost are measured as $d_q$ increases. Representation dimensionality and qubit count are reported separately when the selected encoding maps multiple features to each qubit. This controlled experiment distinguishes scalability with respect to the representation processed by EXAQC from the raw dimensionality of the source image.


\paragraph{Encoding and Parameter Efficiency}
To address \textbf{RQ3}, we compare amplitude, RX, RY, and U3 input encodings on each dataset, holding the encoder and evolutionary configuration fixed, and report test accuracy and evolved gate and parameter counts to assess whether encoding effects are dataset-dependent. To address \textbf{RQ4}, EXAQC-based hybrid models are compared with classical MLP, CNN, and ResNet baselines. Given a classical model with $P_C$ trainable parameters and a hybrid model with $P_H$, parameter reduction is $R_P = \frac{P_C-P_H}{P_C}\times100\%$. Because fewer parameters are only meaningful at comparable accuracy, parameter reduction is always assessed jointly with predictive performance. 


\section{Results}
\label{sec:results}

We evaluate EXAQC along three dimensions: predictive performance and circuit resources relative to prior quantum approaches (RQ1 and RQ2), sensitivity to quantum
encoding (RQ3), and the accuracy parameter trade-off relative to classical architectures (RQ4).

\paragraph{Hyperparameters} EXAQC used a population of 30 and a search budget of 800 candidate genomes for MNIST/Fashion-MNIST and 1000 genomes for CIFAR-10. Candidate parameters were trained for 100 epochs using Adam with a learning rate of $0.001$ and cross-entropy loss. Mutation and Crossover operations were performed adopting threshold values from the classification runs in  ~\cite{kar2026evolutionary}. Experiments were repeated across 10 independent runs. We used 5-6 qubits for MNIST and Fashion-MNIST and 8-10 qubits for CIFAR-10. Quantum input encoding involved amplitude, angle encoding using \textit{RX} and \textit{RY} gates and, U3 encoding where each gate parameter is fed by an encoder output. Dimensionality representation was configured using encoder architectures involving identity, MLP and CNN based feature extractors. The results reported in Tables~\ref{tab:image_prior_work} utilize 1-layered CNN encoders for MNIST and Fashion-MNIST; and a 3-layered CNN encoder for CIFAR-10 to generate best possible results uisng fine-grained features. Table ~\ref{tab:encoding_comparison} utilizes linear encoders (MLPs) for MNIST and Fashion-MNIST to drive circuit exploration based on raw features rather than overly depending on the encoder ; CIFAR-10 utilizes convolutional layers to extract finer features since simple encoders were not able to so efficiently. 50K+10K images were used for training and validation, for MNIST and Fashion MNIST with 10K images for testing. 40K+10K images were used for training and validation with CIFAR-10, and 10K images for testing.







\subsection{Comparison with Prior Image-Classification Approaches}

\begin{table}[!ht]
\centering
\caption{Comparison with prior quantum and quantum-architecture-search approaches on image classification: test accuracy (\%) and quantum circuit resources (G/P = gates / quantum gate parameters). Cls.- specifies how many classes were actually used from the dataset. Accuracies are not necessarily directly comparable, as prior studies differ in number of classes, dataset subsets, preprocessing, and quantum input representations. ``--'' indicates a result or resource count that was not reported.}
\label{tab:image_prior_work}

\setlength{\tabcolsep}{2.5pt}
\renewcommand{\arraystretch}{1.1}
\newcommand{\mth}[2]{\begin{tabular}[c]{@{}l@{}}#1\\[-2pt]{\scriptsize\citep{#2}}\end{tabular}}

\begin{tabular}{@{}l c cc cc cc@{}}
\toprule
& & \multicolumn{2}{c}{\textbf{MNIST}}
  & \multicolumn{2}{c}{\textbf{Fashion-MNIST}}
  & \multicolumn{2}{c}{\textbf{CIFAR-10}} \\
\cmidrule(lr){3-4}\cmidrule(lr){5-6}\cmidrule(l){7-8}
\textbf{Method} & \textbf{Cls.} &
\textbf{Acc.} & \textbf{G/P} &
\textbf{Acc.} & \textbf{G/P} &
\textbf{Acc.} & \textbf{G/P} \\
\midrule
\mth{Hybrid DQAS$^{a}$}{afane2026differentiable}
  & 2  & $97.97 \pm 0.40$ & -- & $93.86 \pm 1.70$ & -- & $71.82 \pm 1.60$ & -- \\
\addlinespace[4pt]

\mth{DQAS$^{b}$}{zhang2022differentiable}
  & 2  & $96.28 \pm 3.90$ & -- & $93.40 \pm 0.09$ & -- & $71.12 \pm 5.70$ & -- \\
  \addlinespace[4pt]
  
\mth{QuantumDARTS$^{b}$}{wu2023quantumdarts}
  & 2  & $95.66 \pm 3.30$ & -- & $92.20 \pm 0.06$ & -- & $70.14 \pm 1.80$ & -- \\
  \addlinespace[4pt]

\midrule

\mth{EQAS}{zhang2023evolutionary}
  & 2 & $94.00$ & 22/\textbf{9} & $75.00$ & {\bf 24/20} & -- & -- \\
  \addlinespace[4pt]
  
\mth{EQNAS$^{e}$}{li2023eqnas}
  & 2 & $87.35$ & 43/20 & $84.15$ & 41/18 & $59.65$ & 43/20 \\
  \addlinespace[4pt]
  
\mth{AQEA-QAS}{li2025aqea}
  & 2 & $91.92$ & 39/16 & $87.15$ & 39/16 & $61.65$ & 39/\textbf{16} \\
  \addlinespace[4pt]

\midrule
  
\mth{Quantum K-NN$^{c,d}$}{maldonado2024quantum}
  & 2  & $94.00 \pm 2.11$ & -- & -- & -- & -- & -- \\
  \addlinespace[4pt]
  
\mth{HQNN-Parallel$^{d}$}{senokosov2024quantum}
  & 10 & $\mathbf{99.21}$ & -- & -- & -- & $82.64$ & -- \\
  \addlinespace[4pt]
\midrule
  
\mth{QNAS$^{f}$}{maleki2026qnas}
  & 10 & $97.16$ & 68/-- & $87.38$ & 42/-- & -- & -- \\
  \addlinespace[4pt]
  
\midrule
\textbf{EXAQC (ours)}
  & 10 & $\mathbf{98.42 \pm 1.03}$ & \textbf{20}/14
       & $\mathbf{90.62 \pm 0.8}$ & 30/21
       & $\mathbf{85.47 \pm 1.37}$ & \textbf{30}/21 \\
\bottomrule
\end{tabular}

\vspace{3pt}
\begin{minipage}{\linewidth}
\footnotesize
$^{a}$Images resized for 4-, 9-, and 16-qubit configurations.
$^{b}$Results from the comparative evaluation in Hybrid DQAS.
$^{c}$MNIST digits 0 and 1 only.
$^{d}$Not an architecture-search method.
$^{e}$Cross-dataset results from the comparative evaluation in AQEA-QAS.
$^{f}$8 qubits/2 layers (MNIST), 5 qubits/2 layers (Fashion-MNIST); gate count follows the original work's transpilation convention.
\end{minipage}
\end{table}

Table~\ref{tab:image_prior_work} compares EXAQC with prior quantum image-classification and architecture-search approaches. EXAQC achieves $98.42 \pm 1.03$\%, $90.62 \pm 0.8$\%, and $85.47 \pm 1.37$\% accuracy on the full 10-class MNIST, Fashion-MNIST, and CIFAR-10 tasks, respectively. On CIFAR-10, this exceeds the reported results of the architecture-search methods included in the comparison, as some approaches use simplified versions of the datasets that are either downscaled in size and/or reduced to fewer classes. 
Table ~\ref{tab:image_prior_work} further compares quantum circuit resources. EXAQC uses 20 gates and 14 parameters on MNIST and 30 gates and 21 parameters on Fashion-MNIST and CIFAR-10. Its evolved circuits use fewer gates than EQNAS and AQEA-QAS architectures, who follow evolutionary search strategies, across all three datasets, although parameter counts are lower only on MNIST.

\subsection{Effect of Quantum Encoding Strategy}
\begin{table*}[!ht]
\centering
\caption{Comparison of quantum encoding strategies for EXAQC across the
image-classification benchmarks. Results are reported over multiple
independent runs as mean $\pm$ standard deviation.}
\label{tab:encoding_comparison}

\small
\setlength{\tabcolsep}{5pt}
\renewcommand{\arraystretch}{1.15}

\begin{tabular}{llccc}
\toprule
\textbf{Dataset}
& \textbf{Quantum Encoding}
& \textbf{Test Acc. (\%)}
& \textbf{\# Gates}
& \textbf{\# Params.} \\
\midrule

\multirow{4}{*}{MNIST}
& Amplitude
& 95.65 $\pm$ 0.1 & 8 & 5  \\
& RX
& 96.22 $\pm$ 0.1 & 11 & 6 \\
& RY
& 96.27 $\pm$ 0.2 & 17 & 9 \\
& U3
& \textbf{96.42 $\pm$ 0.2} & 14 & 14 \\

\midrule

\multirow{4}{*}{Fashion-MNIST}
& Amplitude
& 82.19 $\pm$ 0.2 & 5 & 6 \\
& RX
& \textbf{88.01 $\pm$ 0.8} & 12 & 6 \\
& RY
& 81.90 $\pm$ 1.2 & 13 & 8 \\
& U3
& 87.26 $\pm$ 0.5 & 4 & 4 \\

\midrule

\multirow{4}{*}{CIFAR-10}
& Amplitude
& 61.38 $\pm$ 0.8 & 42 & 22 \\
& RX
& 83.75 $\pm$ 1.4 & 43 & 26 \\
& RY
& 84.37 $\pm$ 1.3 & 45 & 27 \\
& U3
& \textbf{86.68 $\pm$ 1.5} & 60 & 40 \\

\bottomrule
\end{tabular}

\end{table*}

Table~\ref{tab:encoding_comparison} shows that encoding choice has a dataset-dependent effect on both accuracy and circuit complexity. Differences are small on MNIST, where accuracy ranges from $95.65 \pm 0.10$\% to
$96.42 \pm 0.20$\%, but become more pronounced on Fashion-MNIST and CIFAR-10. On CIFAR-10, amplitude encoding achieves $61.38 \pm 0.80$\%, compared with $83.75 \pm 1.40$\%, $84.37 \pm 1.30$\%, and $86.68 \pm 1.50$\% for RX, RY, and U3, respectively. CIFAR-10 also produces larger evolved circuits, containing 42-60 gates and 22-40 parameters, which is expected due to the additional complexity of the task. Because U3 encodes three features per qubit whereas RX and RY encode one, the encodings cannot be compared at both a fixed qubit count and a fixed input dimension. We therefore hold the number of qubits fixed, so U3 receives three times as many input features; matching the input dimension instead would require tripling the qubit count for RX and RY, making the process more computationally expensive.

\subsection{Parameter Reduction}
\begin{table*}[!ht]
\centering
\caption{Comparison of classical and EXAQC-based hybrid models in terms of classification performance and parameter efficiency. Parameter reduction is calculated relative to the corresponding classical baseline for each dataset. Best Results are reported over 10 independent runs using RY angle encoding.}
\label{tab:classical_exaqc_comparison}

\small
\setlength{\tabcolsep}{5pt}
\renewcommand{\arraystretch}{1.15}
\resizebox{\textwidth}{!}{
\begin{tabular}{llccc}
\toprule
\textbf{Dataset}
& \textbf{Model}
& \textbf{Test Acc. (\%)}
& \textbf{\# Trainable Params.}
& \textbf{Param. Reduction ($\mathbf{R_P} \%$)} \\
\midrule

\multirow{2}{*}{MNIST}
& \makecell[l]{MLP\\ $(784 \times 512 \times 128 \times 10)$}
& 97.48 & 468874 & 78.36 ($\downarrow$) \\
& \textbf{EXAQC + MLP}
& 96.83 & 101458 & -- \\

\midrule

\multirow{4}{*}{Fashion-MNIST}
& \makecell[l]{MLP\\ $(784 \times 512 \times 128 \times 10)$}
& 88.03 & 468874 & 76.67 ($\downarrow$) \\
& \textbf{EXAQC + MLP}
& 86.95 & 109421 & -- \\
\cmidrule{2-5}
& 1-layer CNN
& 89.47 & 74922 & 54.83 ($\downarrow$) \\
& 2-layer CNN
& 90.87 & 79418 & 57.38 ($\downarrow$) \\
& \textbf{EXAQC + CNN}
& 91.11 & 33844 & -- \\

\midrule

\multirow{4}{*}{CIFAR-10}
& 3-layer CNN
& 81.11 & 258986 & 61.32 ($\downarrow$) \\
& 10-layer CNN
& 85.50 & 2569962 & 96.10 ($\downarrow$) \\
& ResNet18
& 87.08 & 11173962 & 99.10 ($\downarrow$) \\
& \textbf{\makecell[l]{EXAQC + CNN}}
& 85.68 & 100173 & -- \\

\bottomrule
\end{tabular}
}
\end{table*}

Table~\ref{tab:classical_exaqc_comparison} compares the accuracy and trainable parameter counts of EXAQC-based hybrid models with classical architectures. On MNIST, EXAQC achieves 96.13\% accuracy with 101,458 parameters, compared with 97.48\% using 468,874 parameters for the MLP. On Fashion-MNIST, the CNN-based EXAQC model achieves 91.11\% with 33,844 parameters, compared with 89.47\% and 74,922 parameters for the one-layer CNN. On CIFAR-10, EXAQC achieves 85.68\% with 100,173 parameters, compared with 85.50\% and 2,569,962 parameters for the 10-layer CNN. These results show substantial reductions in trainable parameters while retaining comparable classification performance. 

\subsection{Circuit Visualization}
Figure~\ref{fig:cifar10_genome} shows a representative evolved hybrid architecture for CIFAR-10, the most complex of the three. The encoder applies three convolutional blocks ($3\!\rightarrow\!32\!\rightarrow\!64\!\rightarrow\!128$ feature maps, each followed by $2\times2$ max pooling), and a linear layer then compresses the flattened features to only 8 dimensions. This reduces the 3,072-dimensional input by a factor of 384 before quantum processing. The 8 features are RY-encoded onto 8 qubits, one feature per qubit. The evolved circuit is substantially larger and more densely entangled than the MNIST and Fashion-MNIST circuits. It combines single-qubit rotations, many controlled two-qubit gates, and multi-qubit Ising interactions spanning distant qubits in an irregular, non-layered arrangement that differs from the repeated templates of hand-designed ans\"atze. Probability readout over all 8 qubits produces 256 values, which a linear decoder maps to the 10 class logits. The classical encoder thus performs heavy feature extraction and compression, while the evolved circuit supplies a richer quantum transformation of the compact representation than is needed for the simpler datasets. Representations of circuit genomes as evolved for MNIST and Fashion MNIST dataset have been added in Appendix~\ref{subsec:circuit_viz}. 

\begin{figure}[!h]
    \centering
    \includegraphics[width=\linewidth]{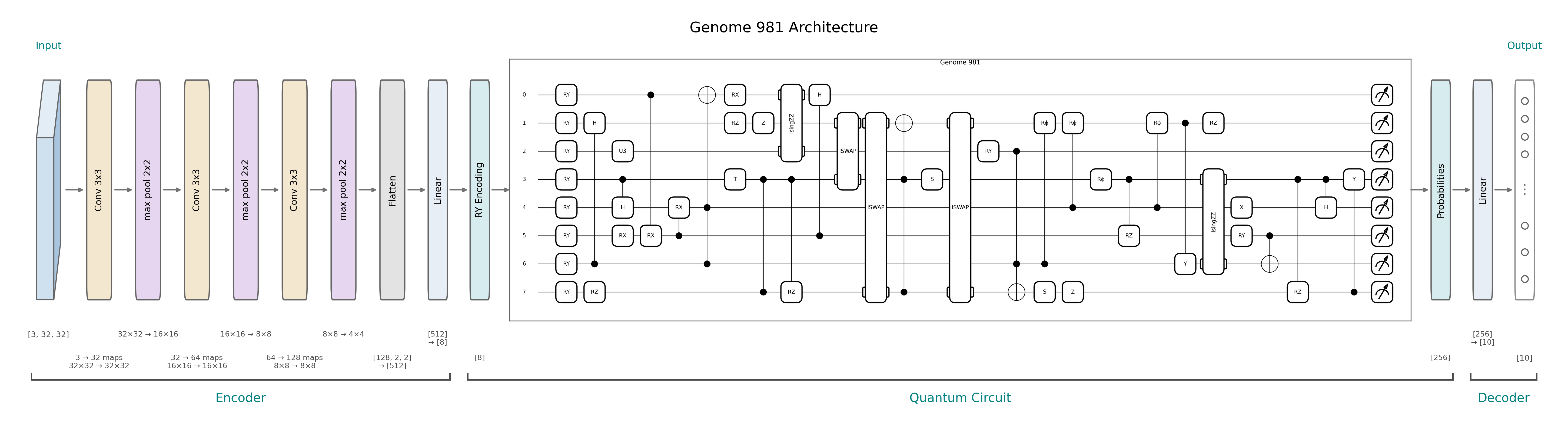}
    \caption{Evolved EXAQC hybrid architecture for CIFAR-10. A three-block CNN encoder compresses each image to 8 features, which are RY-encoded onto 8 qubits.}
    \label{fig:cifar10_genome}
\end{figure}
\section{Discussion}
\label{sec:discussion}

The results demonstrate that EXAQC can be extended to hybrid quantum classical image classification while maintaining compact evolved circuits and competitive predictive performance. Across the three benchmarks,
the experiments reveal differences in performance, circuit complexity, encoding behavior, and parameter requirements as task complexity increases. The convergence and operator analyses further characterize how EXAQC searches and adapts circuit architectures during evolution. We discuss these findings below
in the context of the four research questions.

\textbf{RQ1: \textit{Can EXAQC scale from low-dimensional learning problems to complex image-classification tasks by discovering useful quantum transformations of learned image representations within a hybrid quantum--classical framework?}}
EXAQC remains effective across MNIST, Fashion-MNIST, and CIFAR-10, achieving 98.42\%, 90.62\%, and 85.47\% test accuracy on the full 10-class problems. On CIFAR-10, this exceeds the reported accuracies of the architecture-search methods included in Table~\ref{tab:image_prior_work}, while the evolved circuits use fewer gates than EQNAS and AQEA-QAS on all three datasets. The convergence trajectories in Figure~\ref{fig:loss_evolution} (Refer Appendix \ref{sec:appendixA}) show a consistent pattern across datasets, with rapid initial loss reduction followed by gradual stabilization. Together, these results indicate that evolved PQCs can effectively transform classically extracted image representations as task complexity increases.


\textbf{RQ2: \textit{How does the classical representation supplied to the quantum circuit affect predictive performance and circuit complexity?}}
The choice of classical encoder substantially affects the evolved hybrid models. On Fashion-MNIST, EXAQC with a CNN encoder achieves 91.11\% accuracy with 33,844 parameters, compared with 86.95\% and 109,421 parameters using an MLP encoder (Table~\ref{tab:classical_exaqc_comparison}), so the richer representation improves accuracy while reducing model size. On CIFAR-10, simple encoders did not provide sufficiently informative features, and a CNN encoder was required to reach competitive accuracy. These results suggest that EXAQC performs best when the quantum circuit operates on learned, spatially informed features rather than raw or linearly projected inputs.

\textbf{RQ3: \textit{How does the choice of quantum input encoding affect the accuracy and complexity of evolved circuits, and does this effect depend on the dataset?}}
Encoding effects are strongly dataset-dependent (Table~\ref{tab:encoding_comparison}). On MNIST, all encodings fall within one percentage point (95.65-96.42\%), while on Fashion-MNIST, RX (88.01\%) and U3 (87.26\%) outperform amplitude (82.19\%) and RY (81.90\%) encoding, with U3 achieving this using only 4 gates. On CIFAR-10, the rotation-based encodings (83.75-86.68\%) substantially outperform amplitude encoding (61.38\%), although the best-performing U3 encoding also yields the largest circuits (60 gates, 40 parameters). Encoding choice therefore introduces a dataset-dependent trade-off between accuracy and circuit resources, and its importance grows with task complexity.

\textbf{RQ4: \textit{Do evolved quantum components maintain competitive classification performance relative to larger classical counterparts, despite having fewer trainable parameters?}}
The hybrid models substantially reduce trainable parameters while retaining comparable accuracy (Table~\ref{tab:classical_exaqc_comparison}). On CIFAR-10, EXAQC achieves 85.68\% with 100,173 parameters, matching the 10-layer CNN (85.50\%) with 96.1\% fewer parameters and remaining within 1.4 points of ResNet18 with 99.1\% fewer. On Fashion-MNIST, EXAQC exceeds the one-layer CNN (91.11\% vs.\ 89.47\%) with 54.8\% fewer parameters. On MNIST, the accuracy cost is small (96.83\% vs.\ 97.48\%) for a 78.4\% reduction. The evolved PQCs are also compact at the circuit level, using 20-30 gates. These results demonstrate a favorable accuracy parameter trade-off for the evolved hybrid models.

\section{Limitations and Future Work}
\label{sec:limitations}
While the results demonstrate the potential of EXAQC for compact hybrid image-classification models, they also identify several areas requiring further investigation. First, comparisons with prior work are not strictly controlled because of differences in preprocessing, classification settings, quantum
representations, and resource-counting conventions. Prior works also rely on simplified formulations of the classification datasets. Standardized evaluations would enable more direct comparisons. Second, the effectiveness of the evolved PQC depends on the quality of the features supplied by the classical encoder. High-dimensional image classification therefore requires feature extractors that produce compact quantum-facing representations while preserving task-relevant information.
Future work should investigate joint optimization of the encoder, quantum encoding, and evolved circuit architecture. Third, the current evaluation primarily relies on simulation. Experiments under realistic noise and on quantum hardware are needed to assess the robustness of evolved circuits, with noise-aware evolutionary objectives providing a natural extension. Finally, evolutionary search becomes increasingly expensive as the circuit and representation spaces grow. Future work will explore more efficient distributed search, multi-objective optimization of performance and circuit resources, and evolution of additional hybrid components such as quantum convolutional blocks.
\section{Conclusion}
\label{sec:conclusion}

This work extends EXAQC to evolutionary discovery of quantum circuits for hybrid image classification. Across MNIST, Fashion MNIST, and CIFAR-10, EXAQC evolves compact quantum components that achieve competitive classification performance while substantially reducing the trainable parameter requirements of the surrounding hybrid models. EXAQC achieves $98.42\%$, $90.62\%$, and $85.47\%$ accuracy on the three full 10-class benchmarks, respectively, while the evolved circuits contain only 20-30 gates in the resource comparison. Unlike most prior quantum architecture search methods, which rely on fixed input encodings, EXAQC trains the classical encoder jointly with each evolved circuit, learning the representation supplied to the PQC. This is important as task complexity grows: on CIFAR-10, learned representations with rotation-based encodings reach 83.75-86.68\% accuracy, compared with only 61.38\% using amplitude encoding. To our knowledge, this is the first work to combine learned U3 encodings with evolutionary quantum architecture search, with U3 achieving the highest CIFAR-10 accuracy (86.68\%) among the evaluated encodings. In particular, applying quantum circuits to high-dimensional image data requires effective classical feature extractors that produce compact representations while retaining task-relevant information. On CIFAR-10, the hybrid model achieves 85.68\% accuracy with 100,173 trainable parameters, compared with 85.50\% using 2,569,962 parameters for the 10-layer CNN. Overall, the results support evolutionary search as an approach for discovering compact quantum components, while highlighting the importance of jointly considering feature extraction, quantum encoding, and circuit architecture when designing parameter-efficient hybrid models. 
\section*{Acknowledgments}

This work has been supported by the U.S. National Science Foundation under Grant Number \#2225354. Any opinions, findings, and conclusions or recommendations expressed in this material are those of the authors and do not necessarily reflect the views of the National Science Foundation. We also thank RIT's Research Computing team for their assistance and support~\cite{rit-rc-services}.


\section*{Reproducibility Statement}

We have taken several steps to support reproducibility of the reported experiments. The methodology describes the EXAQC evolutionary search procedure, hybrid quantum classical architecture, quantum encoding and readout strategies, and evaluation metrics, while the experimental setup reports the datasets, model configurations, hyperparameters, and number of independent runs used in our evaluation across Sections~\ref{sec:methodology}-~\ref{sec:discussion}. All experiments use publicly available MNIST, Fashion-MNIST, and CIFAR-10 datasets with appropriate citations. Results are reported across multiple independent runs with measures of variation where applicable in Section ~\ref{sec:results} and the scripts are available in the repository. 


\section*{AI Usage}
Generative AI has been used to design some of the code utilized in these experiments, with a focus on generating unit tests and performing comprehensive code review to find and correct bugs and inconsistencies. It has also been used to polish the writing, identify and correct grammatical errors and typos in the text.


\bibliography{refs}
\bibliographystyle{iclr2027_conference}

\clearpage

\appendix

\section{Appendix - Related Work}
\label{sec:related_work}

\paragraph{Hybrid Quantum Classical Image Classification}
Parameterized quantum circuits (PQCs) have been widely investigated for supervised quantum machine learning \cite{havlivcek2019supervised,schuld2020circuit,perezsalinas2020data}. Their effectiveness depends strongly on data encoding and circuit architecture \cite{schuld2021encoding,sim2019expressibility,hubregtsen2021evaluation}, while increasing circuit complexity can introduce optimization difficulties such as barren plateaus \cite{mcclean2018barren,cerezo2021cost}.

Image classification is challenging for near-term quantum models because of the high dimensionality of image data, so hybrid approaches reduce input dimensionality before quantum processing. Wang et al.~\cite{wang2022development} used principal component analysis or amplitude and angle encodings to fit images onto few qubits, and Liu et al.~\cite{liu2021hybrid} replaced classical convolutional filters with parameterized quantum circuits. Both, however, rely on a single hand-designed circuit and do not couple it to a learned classical encoder. Pesah et al.~\cite{pesah2021absence} proved that quantum convolutional neural networks avoid barren plateaus, but for one predefined architecture designed for quantum data. In contrast, EXAQC evolves circuit structure itself, tailoring gate composition, qubit connectivity, and circuit size to the features of a learned classical encoder.

\paragraph{Quantum Architecture Search}
Quantum architecture search (QAS) instead treats circuit structure as an optimization problem \cite{du2022quantum,du2022qas}. Existing approaches include differentiable architecture search \cite{zhang2020dqas,wu2023quantumdarts,li2023sadqas}, predictor-based search \cite{zhang2021predictor}, hierarchical representations \cite{lourens2023hierarchical}, and other adaptive search strategies \cite{huang2024diversity,ma2024continuous}. Evolutionary computation is particularly suitable for QAS because it can directly modify discrete architectural properties while simultaneously supporting continuous parameter optimization. Earlier evolutionary methods focused on quantum circuit synthesis \cite{lukac2003evolutionary,ruican2008genetic,miranda2021synthesis}, while more recent approaches directly evolve PQC architectures \cite{ding2022evolutionary,zhang2023evolutionary,giovagnoli2023qneat,sunkel2023ga4qco}. Multi-objective \cite{ding2023multi} and adaptive or hybrid evolutionary approaches have also been investigated \cite{li2025aqea,kolle2025evaluating,sunkel2025quantum}.

\paragraph{Quantum Architecture Search for Image Classification}
Recent work has increasingly applied quantum architecture search and automated
circuit design to image-classification problems. EQNAS directly applies
evolutionary quantum neural architecture search to MNIST, Fashion-MNIST, and
CIFAR-10, jointly searching quantum circuit structures for classification
\cite{li2023eqnas}. Evolutionary-based quantum architecture search (EQAS)
similarly explores candidate circuit architectures while removing redundant
parameters to reduce circuit complexity \cite{zhang2023evolutionary}.
More recently, AQEA-QAS introduced an adaptive evolutionary strategy for
quantum architecture search \cite{li2025aqea}, while QNAS considers accurate
and efficient quantum neural networks through a hardware-aware,
multi-objective architecture-search formulation \cite{maleki2026qnas}. Differentiable search provides an alternative to evolutionary architecture
optimization. DQAS relaxes discrete circuit-design choices into a
differentiable optimization problem \cite{zhang2022differentiable}, while
QuantumDARTS adapts differentiable architecture search to variational quantum
algorithms \cite{wu2023quantumdarts}. Recent Hybrid DQAS work further applies
differentiable architecture search to quantum computer vision, including
MNIST, Fashion-MNIST, and CIFAR-10 classification
\cite{afane2026differentiable}. Collectively, these methods demonstrate that
automated architecture selection can replace manually specified PQC ans\"atze,
but differ substantially in search strategy, preprocessing, number of classes,
input representation, and circuit-resource constraints. Other quantum image-classification approaches provide relevant performance
comparisons without explicitly performing architecture search. Senokosov
et al.\ investigate hybrid quantum neural networks for image classification
\cite{senokosov2024quantum}, while quantum K-nearest-neighbor methods provide
an alternative quantum classification paradigm \cite{maldonado2024quantum}.
These approaches broaden the comparison between architecture-searched and
fixed or algorithm-specific quantum classifiers, although differences in
experimental settings prevent direct accuracy comparisons.

EXAQC introduced in \cite{kar2026investigating,kar2026evolutionary} performs gate-level structural evolution through mutation and crossover while optimizing the parameters of
the resulting circuits. Candidate architectures can vary in gate composition, connectivity, depth, and parameter count throughout evolution. In contrast to evaluating architecture search only on compressed low-dimensional inputs, this work
examines EXAQC within hybrid image-classification pipelines and distinguishes two forms of scalability: \emph{task-level scalability}, across increasingly complex image-classification problems, and \emph{representation-level
scalability}, defined by the dimensionality presented to the quantum circuit after classical feature extraction. We further evaluate circuit resources and the extent to which evolved quantum components can replace larger classical
components while retaining competitive classification performance.

\section{Appendix}\label{sec:appendixA}

\subsection{Parameter-Efficiency Analysis}
\label{sec:parameter_efficiency_analysis}

We analyze the conditions under which replacing a classical transformation with an evolved quantum classical component reduces the total number of trainable parameters. Importantly, the analysis includes the classical projection and readout layers required to interface with the PQC rather than
considering only the parameters internal to the quantum circuit.

Let a classical transformation map a $d_i$-dimensional input to a $d_o$-dimensional output. The corresponding hybrid replacement first projects the input to a $d_q$-dimensional quantum-facing representation, applies an evolved PQC $\mathcal{A}$, produces a $d_r$-dimensional quantum readout, and
maps this representation to the required $d_o$ outputs.

\paragraph{Classical Transformation Parameter Count}

A fully connected affine transformation$h_C:\mathbb{R}^{d_i}\rightarrow\mathbb{R}^{d_o}
$ contains $P_C=(d_i+1)d_o$ trainable parameters.
The weight matrix of the transformation has dimension
$d_o\times d_i$ and therefore contains $d_i d_o$ trainable parameters.
The bias vector contains an additional $d_o$ parameters. Hence,
\begin{equation}
P_C=d_i d_o+d_o=(d_i+1)d_o.
\label{eq:classical}
\end{equation}

\paragraph{Hybrid Transformation Parameter Count}

Consider the hybrid transformation
\[
\mathbb{R}^{d_i}
\rightarrow
\mathbb{R}^{d_q}
\rightarrow
Q_{\mathcal{A}}
\rightarrow
\mathbb{R}^{d_r}
\rightarrow
\mathbb{R}^{d_o},
\]
where the first and final mappings are fully connected affine
transformations and $Q_{\mathcal{A}}$ is an evolved PQC containing
$P_Q(\mathcal{A})$ trainable parameters. The total number of trainable
parameters in the hybrid transformation is
\[
P_H
=
(d_i+1)d_q
+
P_Q(\mathcal{A})
+
(d_r+1)d_o.
\]

The input projection from $\mathbb{R}^{d_i}$ to
$\mathbb{R}^{d_q}$ contains
\[
d_i d_q+d_q=(d_i+1)d_q
\]
parameters. The evolved PQC contributes $P_Q(\mathcal{A})$ trainable parameters. Finally, the affine transformation from the $d_r$-dimensional quantum
readout to the $d_o$-dimensional output contains
\[
d_r d_o+d_o=(d_r+1)d_o
\]
parameters. Summing these contributions gives
\begin{equation}
P_H
=
(d_i+1)d_q
+
P_Q(\mathcal{A})
+
(d_r+1)d_o.
\label{eq:hybrid}
\end{equation}

\begin{theorem}[Parameter-Efficient Hybrid Replacement]
\label{thm:parameter_efficiency}

A hybrid quantum classical transformation contains fewer trainable
parameters than the classical affine transformation it replaces if and only
if
\[
P_Q(\mathcal{A}) < B_Q,
\]
where
\[
B_Q
=
(d_i+1)d_o
-
(d_i+1)d_q
-
(d_r+1)d_o.
\]

The quantity $B_Q$ defines the maximum trainable-parameter budget available
to the evolved PQC while preserving a strict reduction in the total number
of trainable parameters.
\end{theorem}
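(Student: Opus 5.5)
The plan is to obtain the claim directly from the two parameter counts already derived: equation~\ref{eq:classical} for the classical affine map, $P_C=(d_i+1)d_o$, and equation~\ref{eq:hybrid} for the hybrid map, $P_H=(d_i+1)d_q+P_Q(\mathcal{A})+(d_r+1)d_o$. The statement ``the hybrid transformation contains fewer trainable parameters than the classical one'' is, by definition, the strict inequality $P_H<P_C$. The task therefore reduces to showing that $P_H<P_C$ is equivalent to $P_Q(\mathcal{A})<B_Q$.

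First I will substitute both expressions into $P_H<P_C$, which gives
\[
(d_i+1)d_q+P_Q(\mathcal{A})+(d_r+1)d_o<(d_i+1)d_o .
\]
Next I will subtract the quantity $(d_i+1)d_q+(d_r+1)d_o$ from both sides. This produces exactly $P_Q(\mathcal{A})<(d_i+1)d_o-(d_i+1)d_q-(d_r+1)d_o=B_Q$. Adding a fixed real number to both sides of a strict inequality is a reversible operation that preserves strictness, so each step is an equivalence. That yields both directions of the ``if and only if'' at once, without splitting into two separate implications.

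For the final sentence of the theorem, I will note that $B_Q$ depends only on the interface dimensions $d_i,d_q,d_r,d_o$ and not on $\mathcal{A}$. It is therefore a fixed threshold for any evolved circuit with those interfaces. Because parameter counts are integers, the strict inequality means $P_Q(\mathcal{A})\le B_Q-1$ is the largest admissible PQC budget. I will also remark that if $B_Q\le 0$, then no circuit (not even one with $P_Q=0$) achieves a reduction, since $P_Q(\mathcal{A})\ge 0$.

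There is no real obstacle here. The only point needing care is to make sure the theorem's notion of ``hybrid replacement'' is exactly the three-stage map counted in equation~\ref{eq:hybrid}. In particular, the encoding and readout stages must contribute no trainable parameters beyond $P_Q(\mathcal{A})$, and the readout dimension $d_r$ must be determined by the readout mode (for example $2^{n_o}$ for probability readout or $n_o$ for expectation values). Both facts are taken as given from the preceding derivation.
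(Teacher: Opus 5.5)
Your proposal is correct and matches the paper's proof: substitute equation~\ref{eq:classical} and equation~\ref{eq:hybrid} into $P_H<P_C$, isolate $P_Q(\mathcal{A})$, and note that every step is reversible, which gives both directions at once. Your extra remarks are sound: the integer bound $P_Q(\mathcal{A})\le B_Q-1$ sharpens the paper's looser ``maximum budget'' wording, and the $B_Q\le 0$ case is what the paper records separately as Corollary~\ref{cor:parameter_efficiency_feasibility}.
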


\begin{proof}
From Eq~\ref{eq:classical}, the classical transformation
contains
\[
P_C=(d_i+1)d_o
\]
parameters. From Eq~\ref{eq:hybrid}, the hybrid
replacement contains
\[
P_H
=
(d_i+1)d_q
+
P_Q(\mathcal{A})
+
(d_r+1)d_o.
\]

The hybrid transformation is parameter-efficient precisely when
\[
P_H<P_C.
\]

Substituting the expressions for $P_H$ and $P_C$ gives
\[
(d_i+1)d_q
+
P_Q(\mathcal{A})
+
(d_r+1)d_o
<
(d_i+1)d_o.
\]

Rearranging for $P_Q(\mathcal{A})$ yields
\[
P_Q(\mathcal{A})
<
(d_i+1)d_o
-
(d_i+1)d_q
-
(d_r+1)d_o.
\]

Defining
\[
B_Q
=
(d_i+1)d_o
-
(d_i+1)d_q
-
(d_r+1)d_o
\]
therefore gives
\[
P_Q(\mathcal{A})<B_Q.
\]

Since every algebraic step is reversible, the condition is both necessary
and sufficient.
\end{proof}

\begin{corollary}[Feasibility of Parameter-Efficient Replacement]
\label{cor:parameter_efficiency_feasibility}

A necessary condition for the existence of a parameter-efficient hybrid
replacement is
\[
B_Q>0.
\]
If $B_Q\leq0$, no PQC containing a nonnegative number of trainable parameters
can make the proposed hybrid transformation smaller than the classical
transformation it replaces.
\end{corollary}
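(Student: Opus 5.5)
The plan is to derive the corollary directly from Theorem~\ref{thm:parameter_efficiency}, using only that parameter counts are nonnegative integers. The theorem says a hybrid replacement with evolved circuit $\mathcal{A}$ satisfies $P_H<P_C$ if and only if $P_Q(\mathcal{A})<B_Q$. Here $B_Q$ depends only on the interface dimensions $d_i,d_q,d_r,d_o$, not on $\mathcal{A}$. So for fixed dimensions, the feasibility question reduces to whether some admissible value of $P_Q(\mathcal{A})$ lies strictly below $B_Q$.

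First I prove necessity by contraposition. Suppose $B_Q\le 0$, and let $\mathcal{A}$ be any evolved PQC. Since $P_Q(\mathcal{A})$ counts trainable gate parameters, $P_Q(\mathcal{A})\ge 0\ge B_Q$, so the strict inequality $P_Q(\mathcal{A})<B_Q$ fails. The ``only if'' direction of Theorem~\ref{thm:parameter_efficiency} then gives $P_H\ge P_C$ for every such circuit. This is exactly the second sentence of the corollary, and it is equivalent to saying that $B_Q>0$ is necessary for a parameter-efficient replacement to exist.

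I would also remark on the converse, though the statement only asks for necessity. If $B_Q>0$, a circuit with $P_Q(\mathcal{A})=0$ (for example, only fixed entangling gates) satisfies the theorem's condition. Because $B_Q$ is an integer, the integer values of $P_Q(\mathcal{A})$ that work are exactly $0,1,\dots,B_Q-1$. One could then expand $B_Q>0$ as
\[
(d_i+1)(d_o-d_q) > (d_r+1)d_o ,
\]
which shows in particular that $d_q<d_o$ is required.

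There is no real obstacle here; the argument is one line once the theorem is invoked. The only point needing care is stating explicitly that $P_Q(\mathcal{A})\ge 0$, and that $d_i,d_q,d_r,d_o$ are held fixed, so that $B_Q$ is a constant independent of the circuit.
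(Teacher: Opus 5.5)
Your argument is correct and is the paper's proof: both combine $P_Q(\mathcal{A})\ge 0$ with the ``only if'' direction of Theorem~\ref{thm:parameter_efficiency}, so that $B_Q\le 0$ makes $P_Q(\mathcal{A})<B_Q$ impossible. Your extra remarks are also correct but not needed for the statement: the converse via a parameter-free circuit, and the consequence $d_q<d_o$ obtained by rewriting $B_Q>0$ as $(d_i+1)(d_o-d_q)>(d_r+1)d_o$.
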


\begin{proof}
By definition,
\[
P_Q(\mathcal{A})\geq0.
\]

Theorem~\ref{thm:parameter_efficiency} requires
\[
P_Q(\mathcal{A})<B_Q.
\]

If $B_Q\leq0$, this would require
\[
P_Q(\mathcal{A})<0,
\]
which is impossible. Therefore, $B_Q>0$ is necessary for a
parameter-efficient hybrid replacement to exist.
\end{proof}

\begin{corollary}[Effect of Quantum Readout Dimension]
\label{cor:readout_parameter_efficiency}

For fixed $d_i$, $d_o$, and $d_q$, the available quantum parameter budget
$B_Q$ decreases linearly with the quantum readout dimension $d_r$.
Specifically,
\[
\frac{\partial B_Q}{\partial d_r}=-d_o.
\]
Consequently, smaller quantum readout representations provide a larger
parameter budget for the evolved PQC.
\end{corollary}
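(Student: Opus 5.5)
The plan is to differentiate the closed-form expression for $B_Q$ from Theorem~\ref{thm:parameter_efficiency} directly, treating $d_r$ as a real variable with $d_i$, $d_o$, and $d_q$ held fixed. First expand the only term that involves $d_r$:
\[
B_Q=(d_i+1)d_o-(d_i+1)d_q-d_r d_o-d_o .
\]
The first two terms and the final $-d_o$ do not depend on $d_r$. $B_Q$ is therefore an affine function of $d_r$ with slope $-d_o$, which gives $\partial B_Q/\partial d_r=-d_o$ immediately.

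Because $d_r$ is really an integer, I would also state the discrete version, which is what the claim means in practice. Increasing the readout by one dimension changes the budget by
\[
B_Q(d_r+1)-B_Q(d_r)=-d_o .
\]
This follows from the same affine form, so the linear-decrease claim holds exactly and not just infinitesimally. Since $d_o\ge 1$ for any nontrivial output layer, the slope is strictly negative. Hence $B_Q$ is strictly decreasing in $d_r$, and a smaller readout gives a larger budget.

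Finally, I would connect this back to Theorem~\ref{thm:parameter_efficiency}. The admissible PQCs are exactly those with $P_Q(\mathcal{A})<B_Q$. Lowering $d_r$ enlarges this admissible set by $d_o$ parameters per removed readout dimension. Through Corollary~\ref{cor:parameter_efficiency_feasibility}, it can also turn an infeasible configuration ($B_Q\le 0$) into a feasible one.

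There is no real obstacle here. The only point needing care is interpretation, namely that $d_r$ is discrete, which the finite-difference remark addresses. It would also be worth noting the concrete consequence for the readout modes of Section~\ref{sec:quantum_training}. Probability readout has $d_r=2^{n_o}$, so its budget shrinks exponentially in the number of output qubits. Expectation-value readout has $d_r=n_o$, so its budget shrinks only linearly.
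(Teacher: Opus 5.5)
Your proposal is correct and follows essentially the same route as the paper. The paper also differentiates the closed form $B_Q=(d_i+1)d_o-(d_i+1)d_q-(d_r+1)d_o$ in $d_r$ and uses $d_o>0$ to get strict decrease, and your finite-difference remark $B_Q(d_r+1)-B_Q(d_r)=-d_o$ is a useful addition because it makes the claim exact for the integer values $d_r$ actually takes.
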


\begin{proof}
From Theorem~\ref{thm:parameter_efficiency},
\[
B_Q
=
(d_i+1)d_o
-
(d_i+1)d_q
-
(d_r+1)d_o.
\]

Holding $d_i$, $d_o$, and $d_q$ fixed and differentiating with respect to
$d_r$ gives
\[
\frac{\partial B_Q}{\partial d_r}
=
-d_o.
\]

Since $d_o>0$, $B_Q$ strictly decreases as $d_r$ increases.
\end{proof}

\begin{corollary}[Expectation-Value versus Probability Readout]
\label{cor:readout_comparison}

Let $n_o$ denote the number of measured output qubits. If expectation-value
readout produces
\[
d_r^{\mathrm{exp}}=n_o
\]
outputs and full probability readout produces
\[
d_r^{\mathrm{prob}}=2^{n_o}
\]
outputs, then the corresponding PQC parameter budgets satisfy
\[
B_Q^{\mathrm{exp}}
-
B_Q^{\mathrm{prob}}
=
(2^{n_o}-n_o)d_o.
\]

Therefore,
\[
B_Q^{\mathrm{exp}}\geq B_Q^{\mathrm{prob}},
\]
with strict inequality for $n_o\geq2$.
\end{corollary}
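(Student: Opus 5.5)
The plan is to specialise the closed form for $B_Q$ from Theorem~\ref{thm:parameter_efficiency} to the two readout choices and subtract. Holding $d_i$, $d_o$ and $d_q$ fixed, and hence the input projection and the target output dimension, I will write
\[
B_Q^{\mathrm{exp}}=(d_i+1)d_o-(d_i+1)d_q-(n_o+1)d_o,\qquad
B_Q^{\mathrm{prob}}=(d_i+1)d_o-(d_i+1)d_q-(2^{n_o}+1)d_o .
\]
Subtracting, the first two terms cancel. The difference is $(2^{n_o}+1)d_o-(n_o+1)d_o=(2^{n_o}-n_o)d_o$, which is the claimed identity. Since $B_Q$ is affine in $d_r$ with slope $-d_o$, this also follows from Corollary~\ref{cor:readout_parameter_efficiency}: multiply the slope by the change $d_r^{\mathrm{prob}}-d_r^{\mathrm{exp}}=2^{n_o}-n_o$.

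For the inequalities, I will use $d_o>0$, as in Corollary~\ref{cor:readout_parameter_efficiency}. It then suffices to show that $2^{n_o}-n_o\ge 0$ for every integer $n_o\ge 1$, and that $2^{n_o}-n_o>0$ for $n_o\ge 2$.

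This needs care, because $2^{n_o}-n_o$ is actually strictly positive for every $n_o\ge 0$ ($2^1-1=1$), so strict inequality holds already at $n_o=1$. The stated ``strict inequality for $n_o\geq 2$'' is therefore true but weaker than it could be. I would present it as a true sufficient condition, noting that $2^{n_o}>n_o$ holds for all $n_o\ge 0$. This can be shown by induction: the base case is $2^0=1>0$, and if $2^{k}>k$ then $2^{k+1}=2\cdot 2^k>2k\ge k+1$ for $k\ge1$, with the step from $k=0$ to $k=1$ checked directly. The weak inequality $B_Q^{\mathrm{exp}}\ge B_Q^{\mathrm{prob}}$ follows at once, and so does the strict one for $n_o\ge2$.

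The only real obstacle is this mismatch between the stated threshold $n_o\ge2$ and the true threshold. I plan to prove the stronger fact and deduce the stated claim from it. No other step needs more than substitution.
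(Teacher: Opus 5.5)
Your proof is correct and follows the paper's argument. Like the paper, you specialise $B_Q$ to $d_r=n_o$ and $d_r=2^{n_o}$, subtract to get $(2^{n_o}-n_o)d_o$, and conclude from $2^{n_o}\ge n_o$, strictly for $n_o\ge 2$. Two of your remarks are accurate refinements: strictness already holds at $n_o=1$, and the strict inequality requires $d_o>0$, which the paper's proof of this corollary leaves implicit.
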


\begin{proof}
For expectation-value readout,
\[
B_Q^{\mathrm{exp}}
=
(d_i+1)d_o
-
(d_i+1)d_q
-
(n_o+1)d_o.
\]

For probability readout,
\[
B_Q^{\mathrm{prob}}
=
(d_i+1)d_o
-
(d_i+1)d_q
-
(2^{n_o}+1)d_o.
\]

Subtracting gives
\[
B_Q^{\mathrm{exp}}
-
B_Q^{\mathrm{prob}}
=
(2^{n_o}-n_o)d_o.
\]

Since
\[
2^{n_o}\geq n_o
\]
for $n_o\geq1$, the expectation-value readout provides a quantum parameter
budget at least as large as the probability readout budget. For $n_o\geq2$,
$2^{n_o}>n_o$, giving strict inequality.
\end{proof}

\begin{lemma}[Gate-Level Quantum Parameter Count]
\label{lem:gate_parameter_count}

Let an evolved EXAQC architecture be represented by
\[
\mathcal{A}=\{g_1,\ldots,g_m\},
\]
and let $p(g_j)\geq0$ denote the number of trainable parameters associated
with gate $g_j$. Then
\[
P_Q(\mathcal{A})
=
\sum_{j=1}^{m}p(g_j).
\]

Consequently, the architecture is within the parameter-efficient region if
\[
\sum_{j=1}^{m}p(g_j)<B_Q.
\]
\end{lemma}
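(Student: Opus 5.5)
The argument has two steps: first establish the additive decomposition of $P_Q(\mathcal{A})$ over gates, then substitute it into Theorem~\ref{thm:parameter_efficiency}.

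\textbf{Step 1: additive count.} I would use the genome representation from Section~\ref{sec:evolutionary_search}. Each gate $g_j$ carries its own parameter list, and no trainable parameter is shared between distinct gates. The set of trainable circuit parameters $\theta$ is then the disjoint union $\theta=\bigsqcup_{j=1}^m \theta_j$, where $\theta_j$ is the set of parameters attached to $g_j$ and $|\theta_j|=p(g_j)$. Counting a finite disjoint union gives
\[
P_Q(\mathcal{A})=|\theta|=\sum_{j=1}^m p(g_j).
\]
Two conventions need to be fixed explicitly for this to hold:
\begin{itemize}
\item Parameter-free gates, such as CNOT, contribute $p(g_j)=0$.
\item Disabled gates, and the data-encoding rotations fed by encoder outputs, contribute no trainable parameters. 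Either $p(g_j)=0$ for them, or they are excluded from the enumeration of $\mathcal{A}$.
\end{itemize}
With these conventions the sum counts exactly the entries of $\theta$. The hypothesis $p(g_j)\ge 0$ makes each term a legitimate cardinality.

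\textbf{Step 2: efficiency condition.} I would substitute the identity from Step 1 into the condition $P_Q(\mathcal{A})<B_Q$ of Theorem~\ref{thm:parameter_efficiency}. That theorem shows the condition is equivalent to $P_H<P_C$. So $\sum_j p(g_j)<B_Q$ places the architecture in the parameter-efficient region. In fact the condition is also necessary, although the lemma only states the "if" direction. Corollary~\ref{cor:parameter_efficiency_feasibility} additionally shows that the condition can only be met when $B_Q>0$.

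\textbf{Main obstacle.} The only delicate point is Step 1's assumption that parameters are not shared or tied across gates. Lamarckian inheritance copies parameter values into offspring, but it does not tie parameters together within a circuit, so this should be harmless. If tying were allowed, the sum would only be an upper bound, $P_Q\le\sum_j p(g_j)$. In that case the "if" direction of the efficiency condition would still hold, since $\sum_j p(g_j)<B_Q$ would still imply $P_Q<B_Q$. I would note this as the fallback so the consequence survives either way.
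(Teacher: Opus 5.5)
Your proposal is correct and follows the same route as the paper, which argues that each trainable gate contributes $p(g_j)$ independent parameters, sums these over the enabled trainable gates to obtain $P_Q(\mathcal{A})$, and then substitutes into Theorem~\ref{thm:parameter_efficiency}. Your explicit conventions for disabled, parameter-free, and data-encoding gates make precise what the paper leaves implicit in the phrase ``summing over all enabled trainable gates'' and in its closing Remark, and so does your upper-bound fallback for tied parameters.
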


\begin{proof}
Each trainable gate contributes $p(g_j)$ independent trainable parameters.
Summing over all enabled trainable gates therefore gives
\[
P_Q(\mathcal{A})
=
\sum_{j=1}^{m}p(g_j).
\]

Substitution into Theorem~\ref{thm:parameter_efficiency} gives
\[
\sum_{j=1}^{m}p(g_j)<B_Q.
\]
\end{proof}

\begin{corollary}[Rotation and U3 Gate Budget]
\label{cor:rotation_u3_budget}

Suppose an evolved circuit contains $m_1$ trainable single-parameter
rotation gates and $m_3$ trainable U3 gates, with all remaining gates
non-parameterized. Then
\[
P_Q(\mathcal{A})=m_1+3m_3,
\]
and the hybrid replacement is parameter-efficient if and only if
\[
m_1+3m_3<B_Q.
\]
\end{corollary}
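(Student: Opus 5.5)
The plan is to apply Lemma~\ref{lem:gate_parameter_count} directly and then read off the equivalence from Theorem~\ref{thm:parameter_efficiency}. First, I would partition the gate set $\mathcal{A}=\{g_1,\ldots,g_m\}$ into three disjoint classes: the $m_1$ trainable single-parameter rotation gates, the $m_3$ trainable U3 gates, and the remaining non-parameterized gates. For these classes I would assign per-gate counts $p(g_j)=1$, $p(g_j)=3$, and $p(g_j)=0$, respectively. The value $3$ for U3 comes from the matrix $U_3(\theta,\phi,\lambda)$ in Section~\ref{sec:classical_encoding}, which is specified by three independent angles. A rotation such as $R_x$ or $R_y$ carries a single angle.

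Next, I would split the sum in Lemma~\ref{lem:gate_parameter_count} over the three classes. This gives
\[
P_Q(\mathcal{A})=\sum_{j=1}^{m}p(g_j)=m_1\cdot 1+m_3\cdot 3+(m-m_1-m_3)\cdot 0=m_1+3m_3.
\]
Finally, I would substitute this expression into Theorem~\ref{thm:parameter_efficiency}. That theorem states that $P_H<P_C$ if and only if $P_Q(\mathcal{A})<B_Q$. The condition therefore becomes $m_1+3m_3<B_Q$, and both directions follow because the theorem is itself an equivalence. Note that Lemma~\ref{lem:gate_parameter_count} only states the sufficient direction, so the ``only if'' part must be taken from the theorem rather than from the lemma.

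The only step that needs care is justifying the per-gate counts. I have to confirm that each trainable gate's angles count as \emph{independent} trainable parameters, meaning no parameter is shared or tied across gates, since sharing would break the additivity used above. Lemma~\ref{lem:gate_parameter_count} already assumes this independence, so I would adopt it explicitly as part of the hypothesis, together with the stipulation that the remaining gates contribute zero parameters.

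A related subtlety is that U3 gates used for \emph{input encoding} are fed by encoder outputs rather than trainable angles, so they are not counted here. I would state that $m_3$ counts only trainable U3 gates in the evolved architecture, consistent with the wording of the corollary. The rest of the argument is routine arithmetic.
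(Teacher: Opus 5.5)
Your proposal is correct and matches the paper's proof: you assign one parameter to each rotation gate, three to each U3 gate and zero to every other gate, sum these via Lemma~\ref{lem:gate_parameter_count} to get $m_1+3m_3$, and substitute into Theorem~\ref{thm:parameter_efficiency}. Your remark that the ``only if'' direction has to come from the theorem, because the lemma gives only the sufficient direction, is accurate and slightly more careful than the paper's one-line argument.
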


\begin{proof}
Each single-parameter rotation contributes one trainable parameter and each
U3 gate contributes three. Therefore,
\[
P_Q(\mathcal{A})
=
m_1+3m_3.
\]

Applying Theorem~\ref{thm:parameter_efficiency} gives the stated condition.
\end{proof}

\paragraph{Remark.}
The parameter counts above include only trainable model parameters.
Data-dependent values used for quantum encoding are not counted as trainable
quantum parameters. For example, the three values supplied to a U3 gate as
input features are excluded from $P_Q(\mathcal{A})$ unless they are themselves
optimized model parameters. Similarly, fixed quantum gates contribute zero
trainable parameters even though they contribute to circuit depth and
execution cost.

\subsection{Datasets}

\begin{figure*}[!ht]
    \centering

    \begin{subfigure}[t]{0.32\textwidth}
        \centering
        \includegraphics[width=\linewidth]{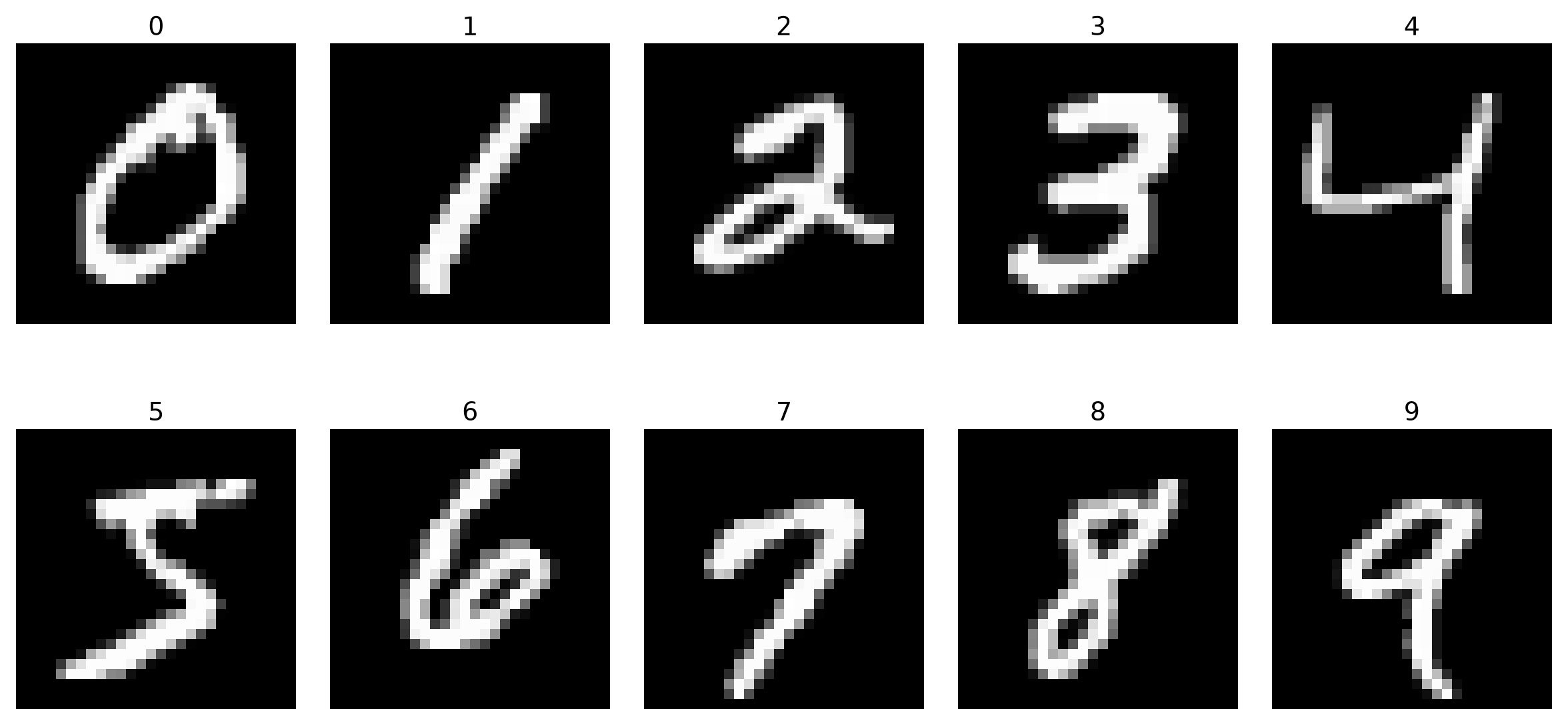}
        \caption{MNIST}
        \label{fig:mnist_samples}
    \end{subfigure}
    \hfill
    \begin{subfigure}[t]{0.32\textwidth}
        \centering
        \includegraphics[width=\linewidth]{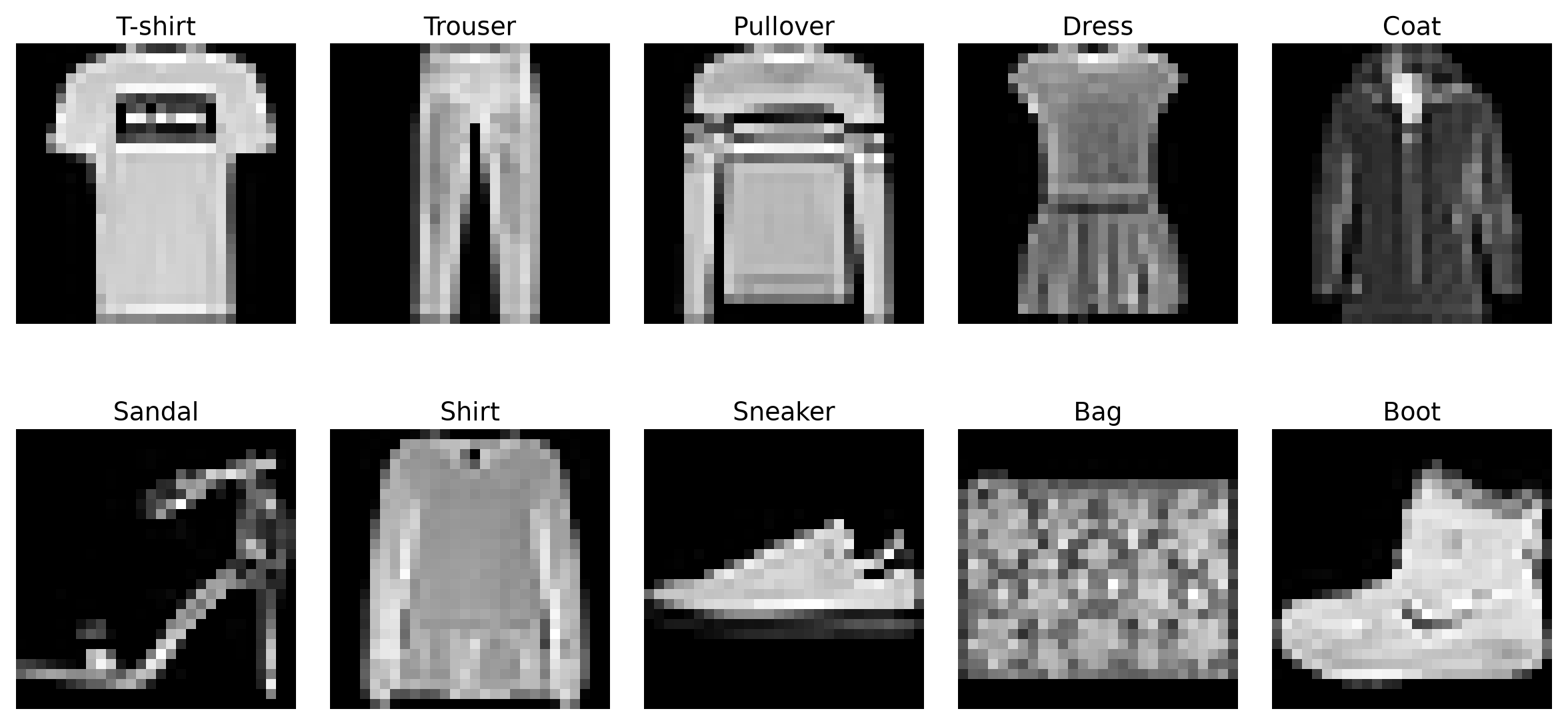}
        \caption{Fashion-MNIST}
        \label{fig:fashion_mnist_samples}
    \end{subfigure}
    \hfill
    \begin{subfigure}[t]{0.32\textwidth}
        \centering
        \includegraphics[width=\linewidth]{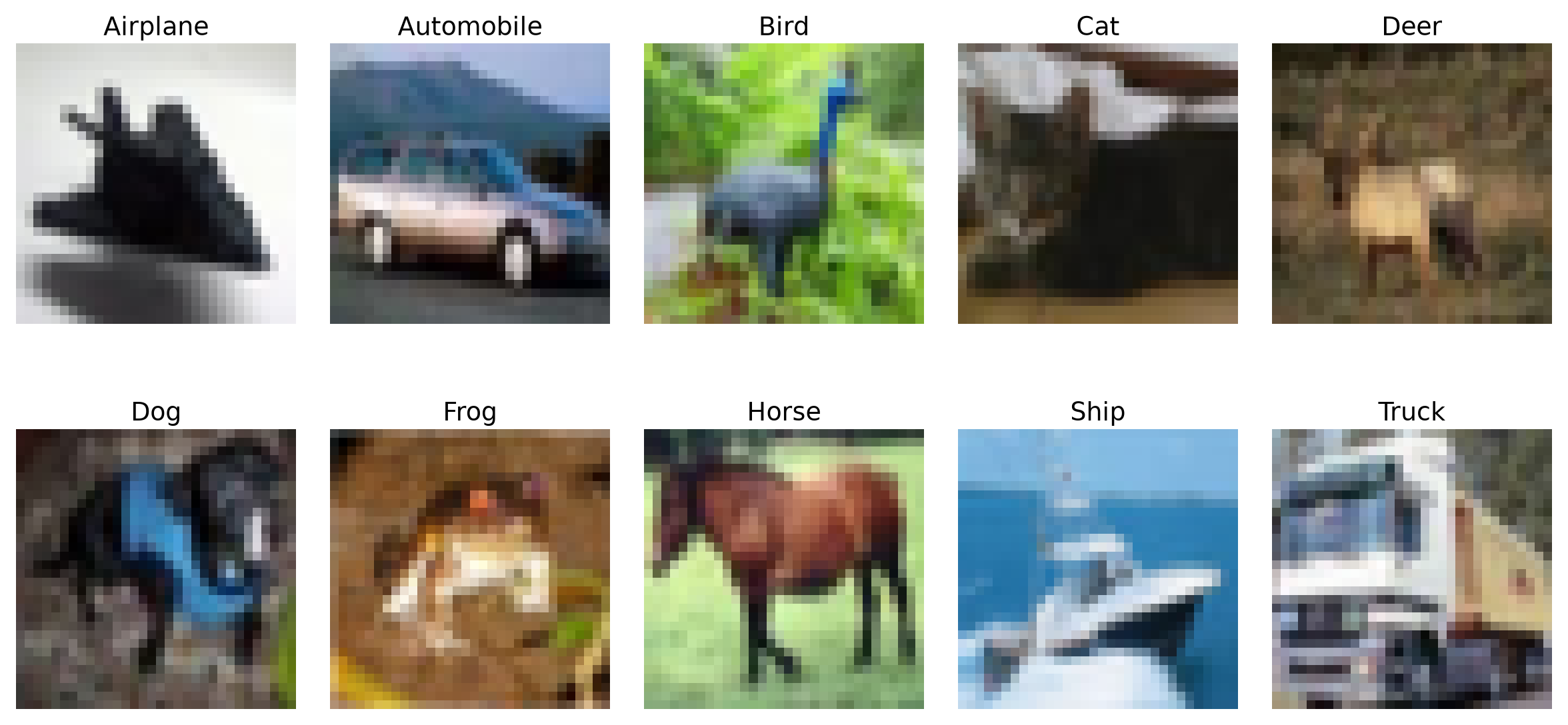}
        \caption{CIFAR-10}
        \label{fig:cifar10_samples}
    \end{subfigure}

    \caption{Representative samples from the three image-classification
    benchmarks considered in this work. Each panel contains one example
    from each of the ten classes. MNIST and Fashion-MNIST consist of
    $28\times28$ grayscale images, whereas CIFAR-10 consists of
    $32\times32$ RGB images.}
    \label{fig:datasets}
\end{figure*}

\paragraph{MNIST}
The MNIST dataset \cite{deng2012mnist} is a widely used benchmark for handwritten digit classification. It consists of 70,000 grayscale images of handwritten digits from 0 to 9, with each image having a resolution of $28 \times 28$ pixels. The dataset is divided into 60,000 training images and 10,000 test images across 10 classes. Due to its relatively simple visual structure, MNIST provides a standard baseline for evaluating image classification models.

\paragraph{Fashion-MNIST}
Fashion-MNIST \cite{xiao2017fashion} is a grayscale image classification dataset designed as a more challenging alternative to MNIST. It contains 70,000 images of clothing and fashion products, each with a resolution of $28 \times 28$ pixels, divided into 10 classes: T-shirt/top, trouser, pullover, dress, coat, sandal, shirt, sneaker, bag, and ankle boot. The dataset contains 60,000 training images and 10,000 test images. Compared with MNIST, Fashion-MNIST exhibits greater visual similarity between classes, resulting in a more challenging classification task.

\paragraph{CIFAR-10}
CIFAR-10 \cite{krizhevsky2009learning} consists of 60,000 color images with a resolution of $32 \times 32$ pixels and three RGB channels. The images are evenly distributed across 10 classes: airplane, automobile, bird, cat, deer, dog, frog, horse, ship, and truck. The standard split contains 50,000 training images and 10,000 test images, with 6,000 images per class overall. CIFAR-10 presents a more challenging classification problem than MNIST and Fashion-MNIST due to its natural-image content, color information, greater intra-class variability, and more complex visual features.

\subsection{Encoder Strategies}
\label{subsec:encoding_strategies}
A key challenge in hybrid quantum classical learning is mapping classical input data to a representation compatible with the limited number of qubits available to the quantum circuit. EXAQC \footnote{Code is available at \url{https://github.com/travisdesell/exaqc}} separates this process into a classical encoder and a subsequent quantum input encoding. The classical encoder transforms the original feature vector or image into the dimensionality required by the quantum circuit. EXAQC currently supports three encoder architectures: \textit{identity}, \textit{linear}, and \textit{convolutional neural network (CNN)} encoders.

\paragraph{Identity Encoder.}
The identity encoder provides the simplest mapping and directly passes the classical input features to the quantum circuit without introducing additional trainable parameters. Given an input vector $\mathbf{x} \in \mathbb{R}^{d}$, the encoder is defined as
\begin{equation}
E_{\mathrm{id}}(\mathbf{x}) = \mathbf{x}.
\end{equation}
Consequently, the input and output dimensionalities of the encoder must be equal. This strategy is suitable when the input data have already been reduced to the dimensionality required by the quantum circuit and avoids introducing additional classical trainable parameters.

\paragraph{Linear Encoder.}
For inputs whose dimensionality is larger than the number of values required by the quantum circuit, EXAQC provides a trainable linear encoder. Given an input vector $\mathbf{x} \in \mathbb{R}^{d}$, the encoder computes
\begin{equation}
E_{\mathrm{linear}}(\mathbf{x})
= \mathbf{W}\mathbf{x} + \mathbf{b},
\end{equation}
where $\mathbf{W} \in \mathbb{R}^{m \times d}$ and $\mathbf{b} \in \mathbb{R}^{m}$ are trainable parameters and $m$ denotes the required quantum-input dimensionality. For image inputs, the image is first flattened before the linear transformation is applied. This approach therefore learns a compact projection of the original feature space while allowing the projection parameters to be optimized jointly with the hybrid model.

\paragraph{CNN Encoder.}
For image-based tasks, EXAQC additionally supports a configurable CNN encoder that preserves and extracts spatial information before constructing the quantum input representation. Instead of immediately flattening the image, the encoder applies a sequence of configurable convolutional blocks followed by adaptive pooling and a fully connected projection head. In general, the transformation can be expressed as
\begin{equation}
\mathbf{z}
=
E_{\mathrm{CNN}}(\mathbf{X})
=
f_{\mathrm{out}}
\left(
f_{\mathrm{FC}}
\left(
\mathrm{Flatten}
\left(
f_{\mathrm{pool}}
\left(
f_{\mathrm{conv}}(\mathbf{X})
\right)
\right)
\right)
\right),
\end{equation}
where $\mathbf{X}$ is the input image, $f_{\mathrm{conv}}$ represents the configurable convolutional feature extractor, $f_{\mathrm{pool}}$ denotes adaptive pooling, $f_{\mathrm{FC}}$ is the fully connected projection head, and $f_{\mathrm{out}}$ maps the resulting features to the values supplied to the quantum circuit. The convolutional blocks can be configured with different channel sizes, activation functions, and pooling operations, while the projection head can contain multiple fully connected layers and dropout. The final output activation can be configured as \texttt{tanh}, \texttt{sigmoid}, or \texttt{identity}. This encoder is particularly relevant for MNIST, Fashion-MNIST, and CIFAR-10 because it allows spatial feature extraction to be performed classically before the resulting compact representation is supplied to the evolved quantum circuit.

\paragraph{Interface with Quantum Encoding.}
The output of the classical encoder is subsequently mapped onto the input qubits according to the selected quantum input mode. For rotation-based encodings, a single classical value is required per input qubit, whereas the \texttt{u3} mode requires three values per qubit corresponding to the parameters of the general single-qubit rotation. Thus, for $n$ input qubits, the required encoder output dimensionality is
\begin{equation}
d_{\mathrm{enc}} =
\begin{cases}
n & \text{single-parameter rotation encoding}, \\
3n & \text{\texttt{u3} encoding},
\end{cases}
\end{equation}
EXAQC additionally supports amplitude-based quantum inputs, for which an $n$-qubit register can represent at most $2^n$ amplitudes. Accordingly, the framework validates that the encoder output does not exceed the available amplitude-space dimensionality,
\begin{equation}
d_{\mathrm{enc}} \leq 2^n.
\end{equation}
This separation between classical feature extraction and quantum-state preparation enables EXAQC to accommodate both low-dimensional tabular inputs and high-dimensional image data while controlling the dimensionality of the information presented to the evolved quantum circuit.

\subsection{Decoder Strategies}
\label{subsec:decoding_strategies}

Following execution of the evolved quantum circuit, EXAQC employs a decoder to transform the measured quantum outputs into the dimensionality required by the downstream learning task. This provides an interface between the quantum representation and the final task-specific output, such as class logits for classification. EXAQC currently supports two decoding strategies: \textit{clipped} and \textit{linear} decoding. Let the output obtained from the quantum circuit be represented as
\begin{equation}
    \mathbf{z} =
    \left[z_1,z_2,\ldots,z_{d_q}\right]
    \in \mathbb{R}^{d_q},
\end{equation}
where $d_q$ denotes the dimensionality of the quantum circuit output. The decoder maps this representation to
\begin{equation}
    D:\mathbb{R}^{d_q}\rightarrow\mathbb{R}^{d_o},
\end{equation}
where $d_o$ is the output dimensionality required by the learning task.

\paragraph{Clipped Decoder.}
The clipped decoder provides a parameter-free transformation by retaining
only the first $d_o$ elements of the quantum output:
\begin{equation}
    D_{\mathrm{clip}}(\mathbf{z})
    =
    \left[z_1,z_2,\ldots,z_{d_o}\right],
    \qquad d_o \leq d_q.
\end{equation}
Any remaining quantum outputs are discarded. Consequently, this decoder does not introduce additional trainable parameters and requires the quantum circuit to produce at least as many outputs as required by the task. Importantly, the retained values are returned directly without additional normalization. For classification, these values can therefore be interpreted as logits and supplied directly to a cross-entropy loss, which performs the required normalization internally.

\paragraph{Linear Decoder.}
The linear decoder introduces a trainable fully connected layer that maps the quantum output representation to the required output dimensionality. Given quantum outputs $\mathbf{z} \in \mathbb{R}^{d_q}$, the decoder
computes
\begin{equation}
    D_{\mathrm{linear}}(\mathbf{z})
    =
    \mathbf{W}_{d}\mathbf{z}+\mathbf{b}_{d},
\end{equation}
where
$\mathbf{W}_{d}\in\mathbb{R}^{d_o\times d_q}$ and
$\mathbf{b}_{d}\in\mathbb{R}^{d_o}$ are trainable parameters. The decoder
therefore introduces
\begin{equation}
    P_{\mathrm{decoder}}
    =
    d_qd_o+d_o
    =
    d_o(d_q+1)
\end{equation}
additional trainable parameters. Unlike clipped  decoding, the linear decoder can both reduce and expand the dimensionality of the quantum output $\mathbf{z}$ while learning combinations of information obtained from different quantum measurements.

For the image-classification experiments considered in this work, $d_o=10$ for MNIST, Fashion-MNIST, and CIFAR-10. Thus, the decoder ultimately produces ten values corresponding to the ten target classes. The clipped decoder directly selects ten quantum outputs when $d_q\geq10$, whereas the linear decoder learns a mapping from the $d_q$ quantum outputs to the ten class logits $\hat{\mathbf{y}}$:
\begin{equation}
    \mathbf{z}\in\mathbb{R}^{d_q}
    \xrightarrow{D}
    \hat{\mathbf{y}}\in\mathbb{R}^{10}.
\end{equation}

Together with the classical encoder, the decoder allows the evolved quantum circuit to operate as an intermediate representation within a hybrid architecture,
\begin{equation}
    \mathbf{x}
    \xrightarrow{E}
    \mathbf{x}_{q}
    \xrightarrow{Q_{\theta}}
    \mathbf{z}
    \xrightarrow{D}
    \hat{\mathbf{y}},
\end{equation}
where $E$ denotes the classical encoder, $Q_{\theta}$ represents the evolved parameterized quantum circuit, and $D$ denotes the selected decoder. This modular formulation enables EXAQC to vary the dimensionality of the quantum representation independently of the dimensionality required by the final learning task.

\begin{figure}[!t]
	\centering
	\begin{subfigure}{0.32\linewidth}
		\includegraphics[width=\linewidth]{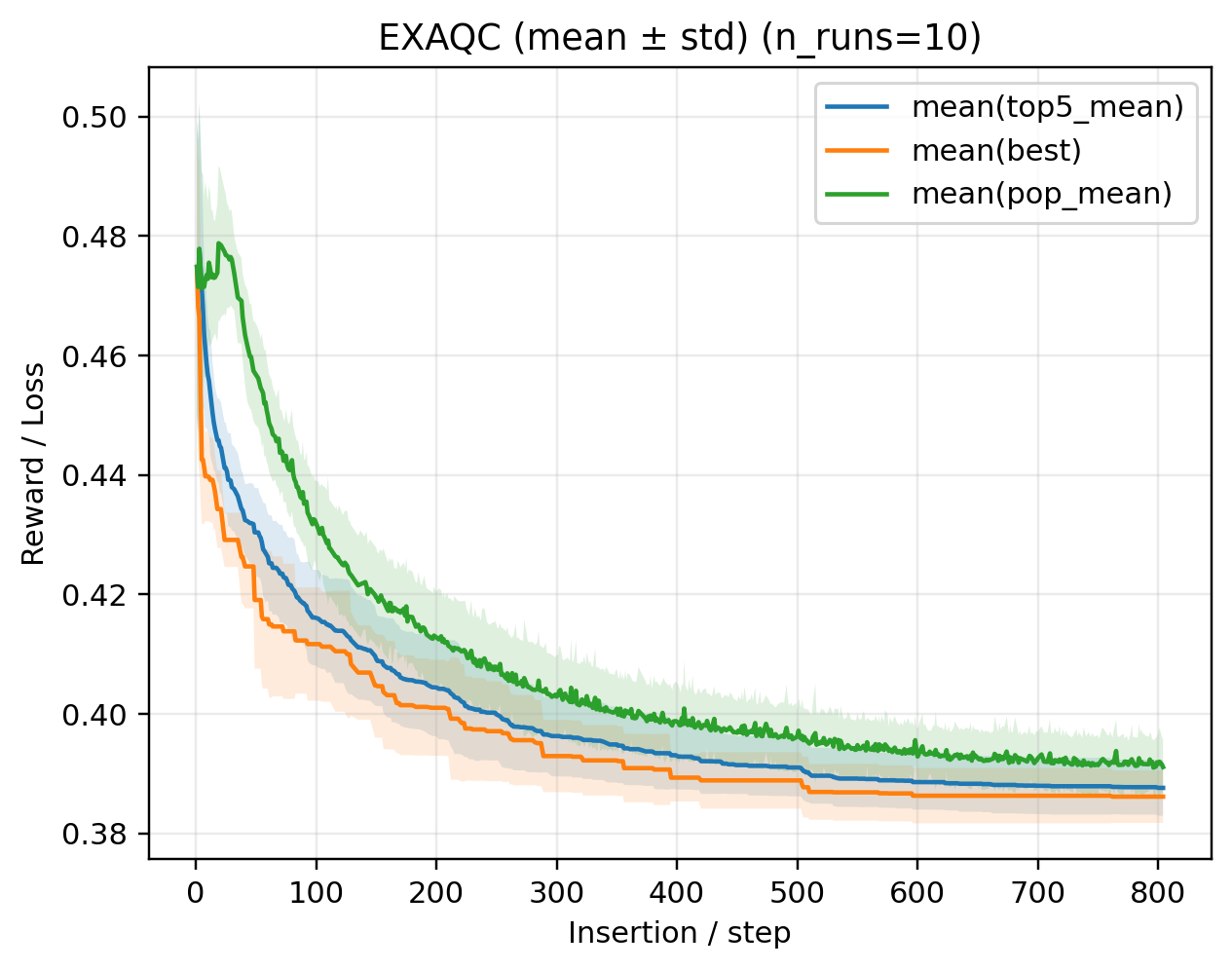}
		\caption{MNIST}
		\label{fig:mnist_loss_plot}
	\end{subfigure}
	\begin{subfigure}{0.32\linewidth}
		\includegraphics[width=\linewidth]{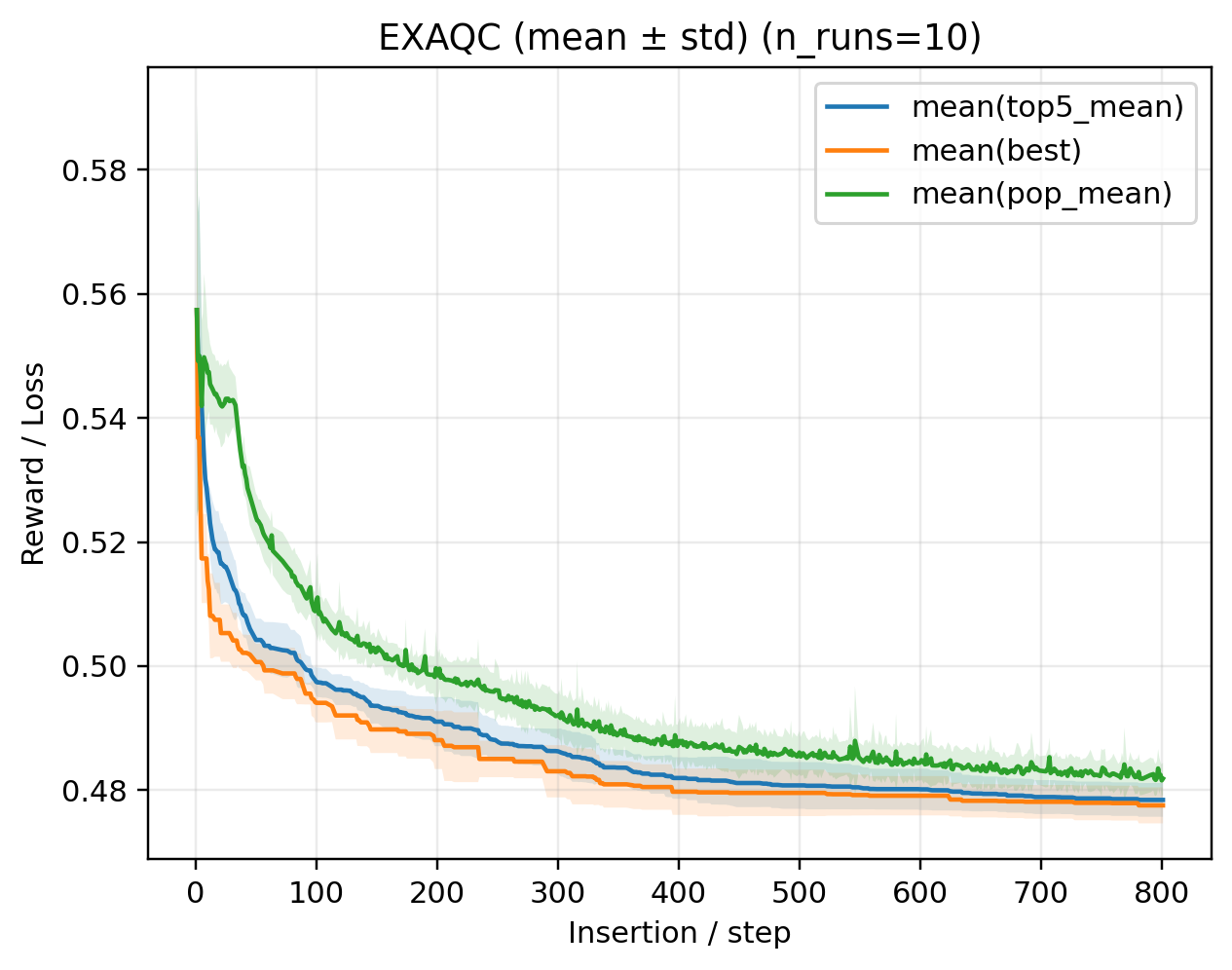}
		\caption{Fashion-MNIST}
		\label{fig:fashion_mnist_loss_plot}
	\end{subfigure}
	\begin{subfigure}{0.34\linewidth}
	        \includegraphics[width=\linewidth]{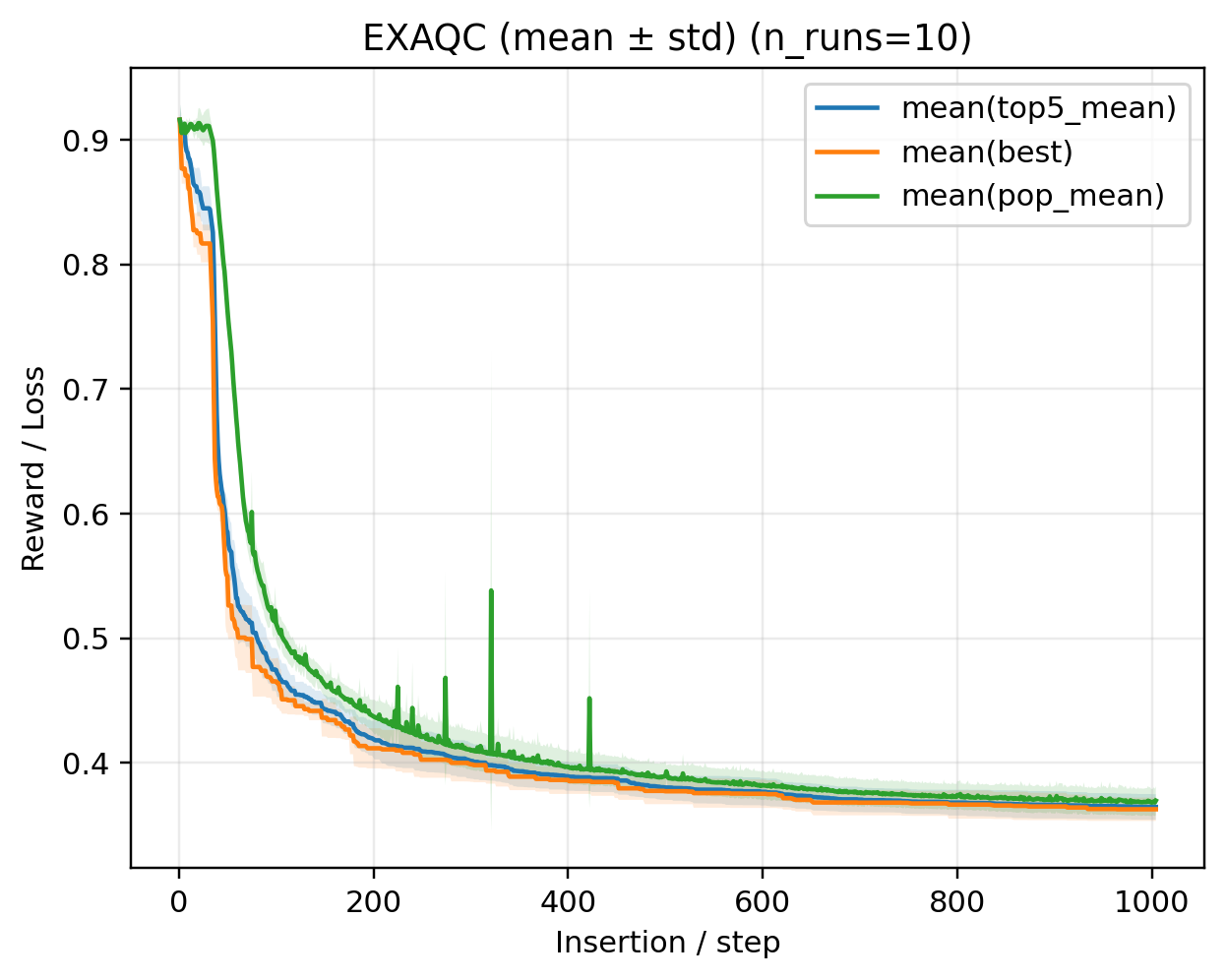}
	        \caption{CIFAR-10}
	        \label{fig:cifar10_loss_plot}
         \end{subfigure}
	\caption{Evolution of classification loss during EXAQC search on (a) MNIST, (b) Fashion-MNIST, and (c) CIFAR-10 across 10 independent runs. Curves show the mean loss for the top-performing individuals, the best individual, and the overall population, with shaded regions indicating $\pm$ one standard deviation across runs.
    }
	\label{fig:loss_evolution}
\end{figure}

\subsection{Operator Analysis}

\begin{table*}[!ht]
\centering

\resizebox{0.62\linewidth}{!}{
\begin{tabular}{p{2.2cm}p{2cm}p{1.8cm}p{2cm}p{1.8cm}}
\toprule
&
&
{\bf MNIST} &
{\bf Fashion-MNIST} &
{\bf CIFAR-10} \\
\midrule

\multirowcell{3}{exponential\\crossover}
& global best
& {\bf 0.027}
& 0.039
& 0.024 \\
& inserted
& 0.227
& 0.336
& 0.287 \\
& discarded
& 0.747
& 0.625
& 0.689 \\
\hline

\multirowcell{3}{reorder\\gate}
& global best
& 0.016
& \textbf{\textit{0.073}}
& {\it 0.041} \\
& inserted
& \textbf{\textit{0.404}}
& {\bf 0.616}
& 0.475 \\
& discarded
& {\it 0.580}
& {\bf 0.311}
& {\it 0.484} \\
\hline

\multirowcell{3}{add\\gate}
& global best
& 0.023
& 0.064
& 0.040 \\
& inserted
& 0.321
& {\it 0.582}
& 0.445 \\
& discarded
& 0.655
& {\it 0.354}
& 0.515 \\
\hline

\multirowcell{3}{disable\\gate}
& global best
& \textbf{\textit{0.028}}
& {\it 0.065}
& {\bf 0.056} \\
& inserted
& {\it 0.400}
& 0.576
& 0.459 \\
& discarded
& \textbf{\textit{0.572}}
& 0.360
& 0.485 \\
\hline

\multirowcell{3}{clone}
& global best
& {\it 0.026}
& 0.063
& \textbf{\textit{0.064}} \\
& inserted
& {\bf 0.400}
& \textbf{\textit{0.643}}
& \textbf{\textit{0.522}} \\
& discarded
& {\bf 0.574}
& \textbf{\textit{0.294}}
& \textbf{\textit{0.414}} \\
\hline

\multirowcell{3}{qubit\\swap}
& global best
& 0.012
& 0.052
& 0.023 \\
& inserted
& 0.380
& 0.565
& {\bf 0.493} \\
& discarded
& 0.608
& 0.383
& {\bf 0.483} \\
\hline

\multirowcell{3}{enable\\gate}
& global best
& 0.019
& {\bf 0.070}
& 0.030 \\
& inserted
& 0.346
& 0.504
& {\it 0.483} \\
& discarded
& 0.634
& 0.426
& 0.488 \\
\hline

\multirowcell{3}{n-ary\\crossover}
& global best
& 0.015
& 0.017
& 0.027 \\
& inserted
& 0.194
& 0.259
& 0.220 \\
& discarded
& 0.791
& 0.724
& 0.753 \\
\hline

\end{tabular}
}

\caption{
\label{tab:image_operator_performance}
Performance rates for each mutation and crossover operation on the
MNIST, Fashion-MNIST, and CIFAR-10 image-classification tasks.
Best performance (high for global-best and inserted offspring, low for discarded offspring) is shown in bold italics, second best in bold, and third best in italics.
}
\end{table*}

\begin{figure}
	\centering
	\begin{subfigure}{0.32\linewidth}
		\includegraphics[width=\linewidth]{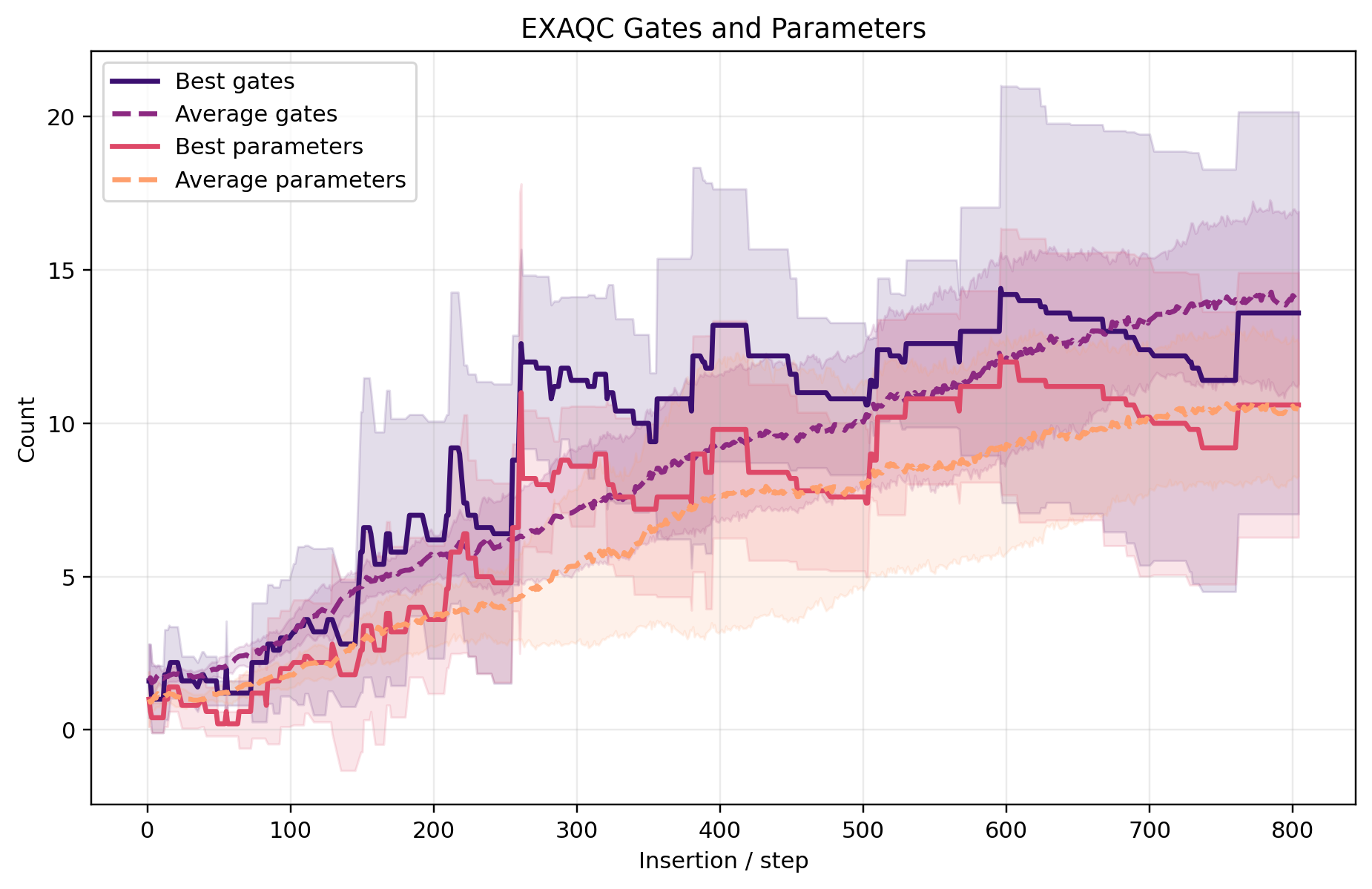}
		\caption{MNIST}
		\label{fig:mnist_gate_param}
	\end{subfigure}
	\begin{subfigure}{0.32\linewidth}
		\includegraphics[width=\linewidth]{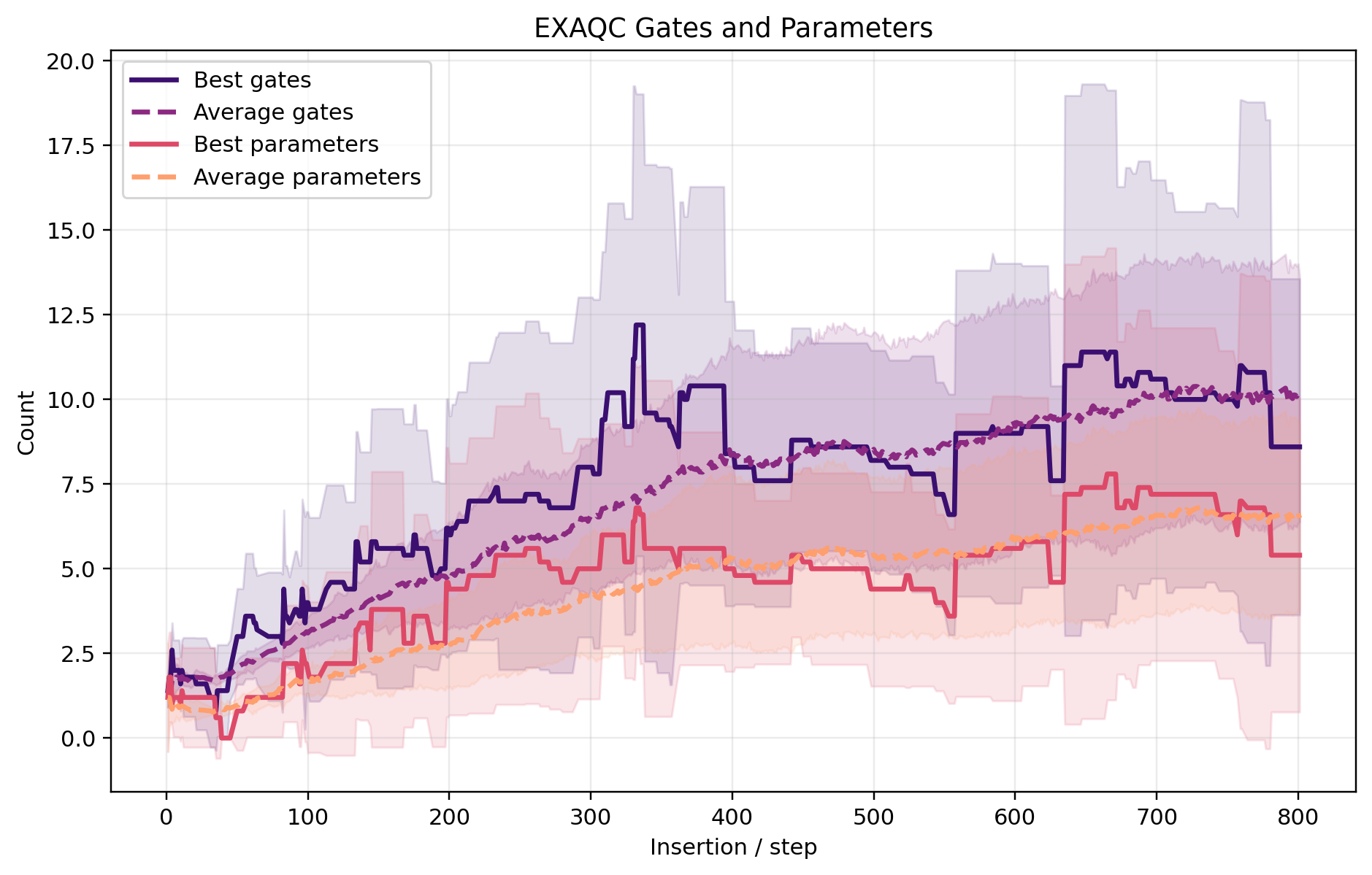}
		\caption{Fashion-MNIST}
		\label{fig:fashion_mnist_gate_param}
	\end{subfigure}
	\begin{subfigure}{0.32\linewidth}
	        \includegraphics[width=\linewidth]{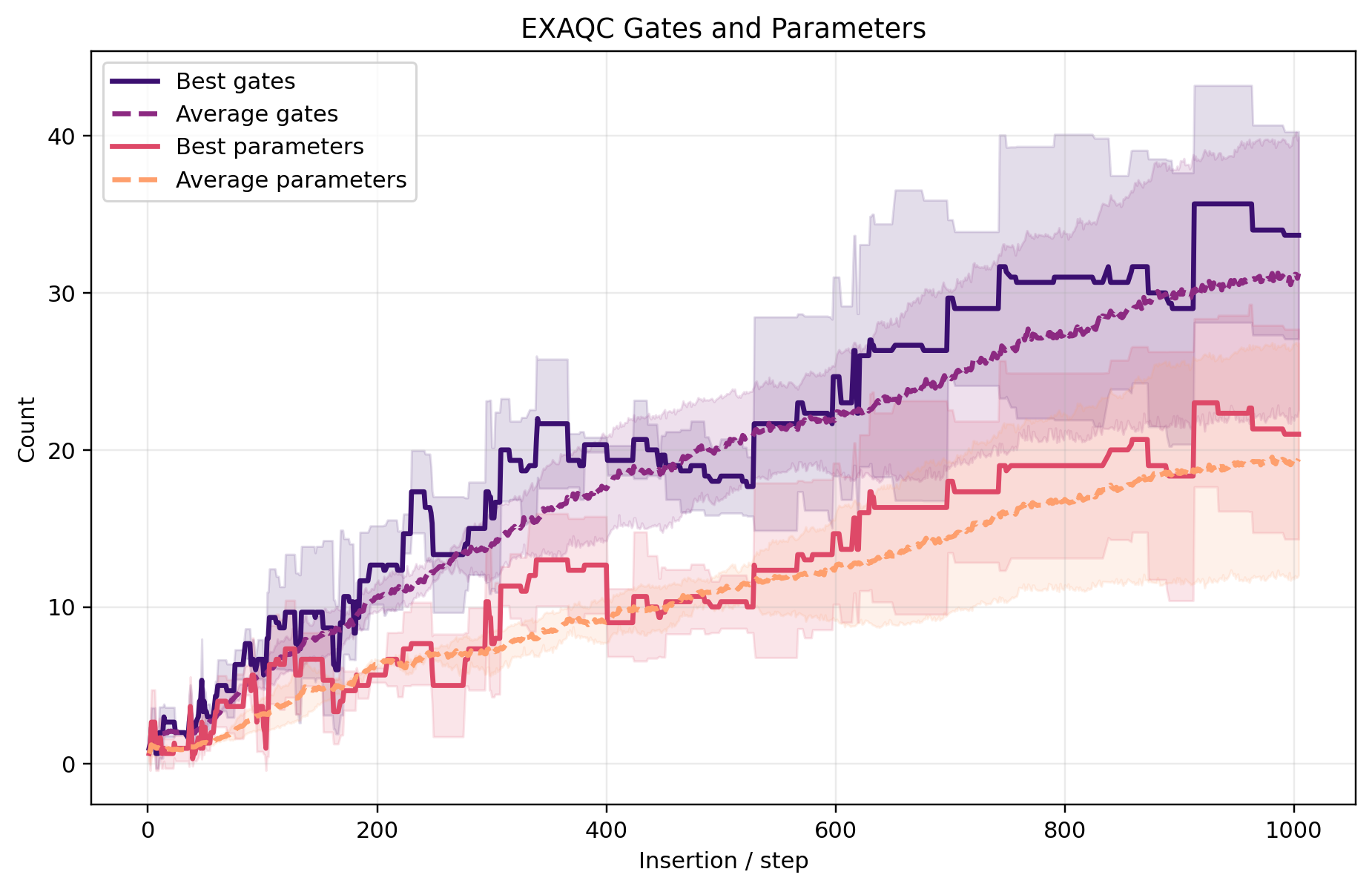}
	        \caption{CIFAR-10}
	        \label{fig:cifar10_gate_param}
         \end{subfigure}
	\caption{Evolution of quantum circuit complexity during EXAQC search on (a) MNIST, (b) Fashion-MNIST, and (c) CIFAR-10. Solid lines show the gate and parameter counts of the best-so-far circuits, while dashed lines show the average counts of generated circuits. Shaded regions indicate variation across independent runs.}
	\label{fig:gate_params}
\end{figure}

Figure~\ref{fig:gate_params} shows that circuit complexity increases rapidly early in evolution before stabilizing across all three datasets. Gate and parameter counts fluctuate rather than growing monotonically, consistent with EXAQC's gate-adding and gate-removing operators. Overall, EXAQC dynamically adapts circuit complexity while continuing to explore compact architectures. Table~\ref{tab:image_operator_performance} highlights complementary roles among the evolutionary operators. \textit{Reorder-gate} achieves the highest insertion rate on MNIST, while \textit{clone} dominates insertion on Fashion-MNIST and CIFAR-10. However, frequently accepted operators are not necessarily those most responsible for performance improvements: \textit{disable-gate} produces the highest global-best rate on MNIST, and \textit{reorder-gate} on Fashion-MNIST. On CIFAR-10, \textit{clone} leads in all three categories, suggesting that preserving and propagating strong circuits is particularly valuable on the more complex task. Crossover operators are generally more disruptive, with \textit{n-ary crossover} exhibiting the highest discard rate on all three datasets, whereas \textit{reorder-gate} and \textit{qubit-swap} effectively modify existing architectures. Overall, these results show that EXAQC benefits from heterogeneous operators that respectively preserve solutions, adjust circuit complexity, refine circuit structure, and promote broader exploration.

\subsection{Circuit Visualization}\label{subsec:circuit_viz}

\paragraph{Evolved MNIST Architecture}
Figure~\ref{fig:mnist_genome} shows a evolved hybrid architecture instance for MNIST, the simplest of the three. No convolutional layers are used: the flattened 784-pixel input passes through a two-layer MLP encoder ($784\!\rightarrow\!128\!\rightarrow\!6$), and the resulting 6 features are RY-encoded onto 6 qubits. The evolved circuit is compact, containing only seven gates, all parameterized: three IsingZZ interactions, two phase rotations, one RZ, and one controlled-RX. Probability readout over the 6 qubits yields 64 values, which a linear decoder maps to the 10 classes. Notably, under computational-basis readout, only the gates that change measurement probabilities contribute to the output. Every gate here except the controlled-RX is diagonal in the computational basis, so only the controlled-RX and the IsingZZ gate acting on its target qubit affect the output distribution.

\begin{figure}[!h]
    \centering
    \includegraphics[width=\linewidth]{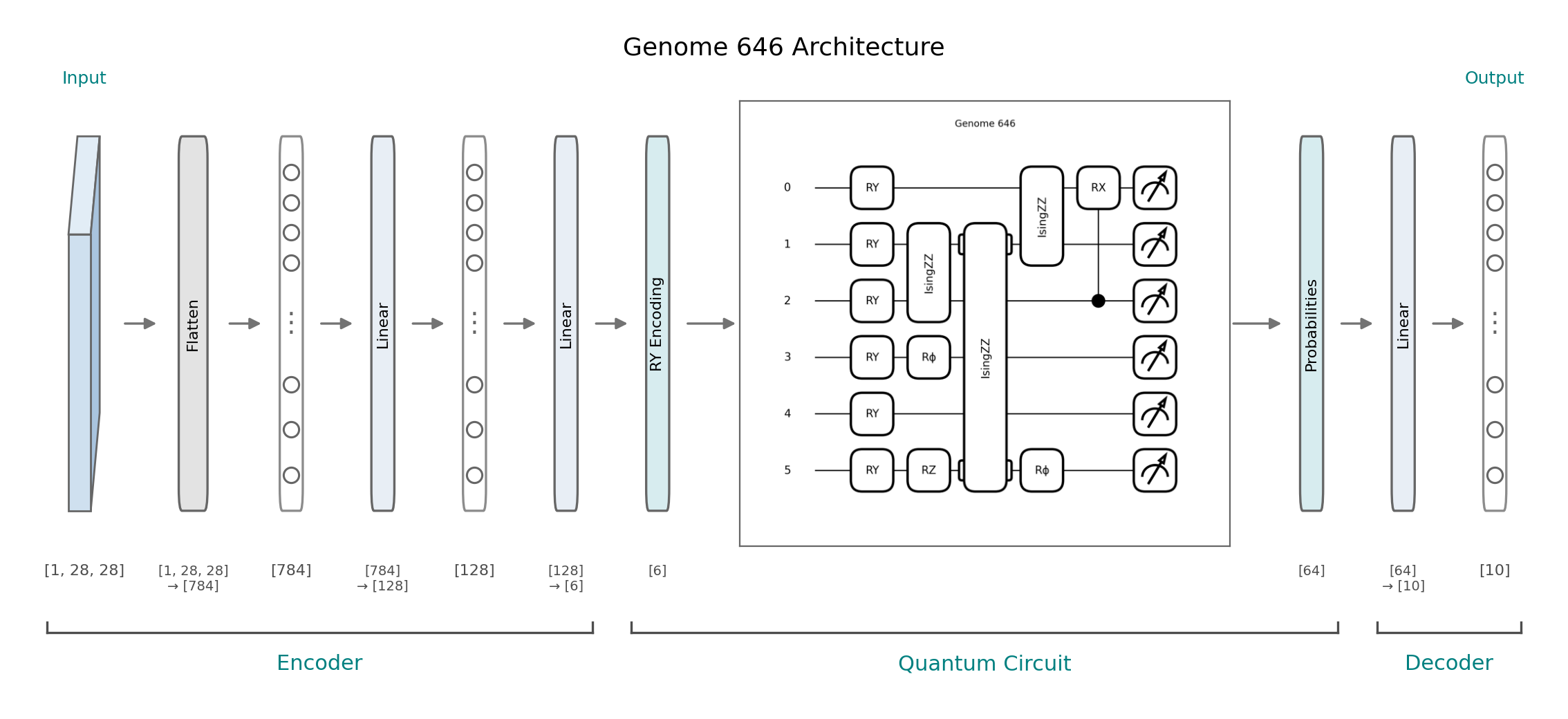}
    \caption{MNIST Genome Representation}
    \label{fig:mnist_genome}
\end{figure}

\paragraph{Evolved Fashion-MNIST Architecture}
Figure~\ref{fig:fashion_mnist_genome} shows a evolved hybrid architecture instance for Fashion-MNIST. A single convolutional block ($1\!\rightarrow\!32$ feature maps with $2\times2$ max pooling) is followed by two linear layers that compress the features to only 5 dimensions, a reduction of roughly $157\times$ from the 784-pixel input. These features are RY-encoded onto 5 qubits, one feature per qubit. The evolved circuit mixes parameterized two-qubit interactions (IsingYY, IsingZZ, and a controlled phase rotation) with fixed Clifford and non-Clifford gates (H, Z, SX, T, CNOT, CZ, and controlled-$Y$ and controlled-$H$). Probability readout over all 5 qubits yields 32 values, which a linear decoder maps to the 10 classes.

\begin{figure}[!h]
    \centering
    \includegraphics[width=\linewidth]{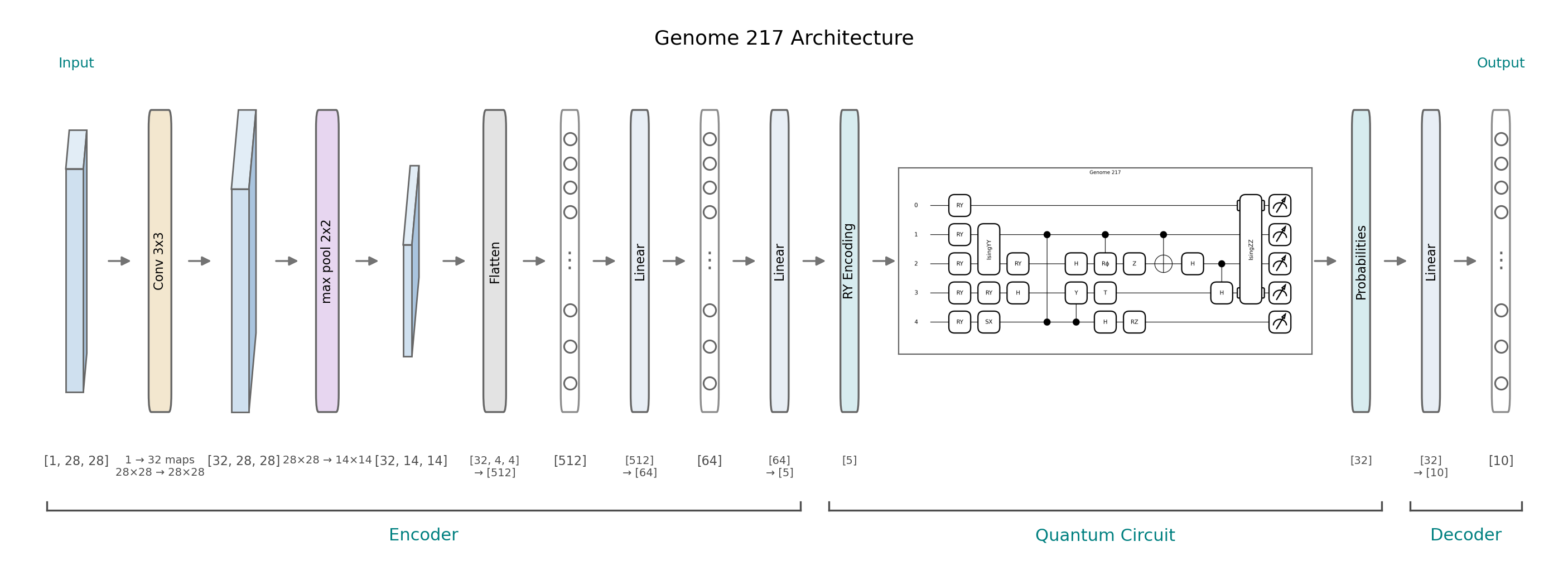}
    \caption{Fashion MNIST Genome Representation}
    \label{fig:fashion_mnist_genome}
\end{figure}



\end{document}